\newcommand{\cmark}{\ding{51}}
\newcommand{\xmark}{\ding{55}}
\theoremstyle{plain}
\newtheorem{theorem}{Theorem}[section]
\newtheorem{lemma}[theorem]{Lemma}
\theoremstyle{definition}
\theoremstyle{remark}
\begin{document}

%
\runningtitle{Injecting Measurement Information Yields a Fast and Robust Inverse Problem Solver}
%
\runningauthor{Patsenker\textsuperscript{*}, Li\textsuperscript{*}, Ko, Jia, Kluger}

\twocolumn[

\aistatstitle{Injecting Measurement Information Yields a Fast and Noise-Robust Diffusion-Based Inverse Problem Solver}

\aistatsauthor{ Jonathan Patsenker\textsuperscript{*} \And Henry Li\textsuperscript{*} \And  Myeongseob Ko}

\aistatsaddress{ Yale University \And  Yale University \And Virginia Tech}

\aistatsauthor{Ruoxi Jia \And Yuval Kluger}

\aistatsaddress{Virginia Tech \And Yale University}

]

\begin{abstract}
Diffusion models have been firmly established as principled zero-shot solvers for linear and nonlinear inverse problems, owing to their powerful image prior and iterative sampling algorithm. These approaches often rely on Tweedie's formula, which relates the diffusion variate $\mathbf{x}_t$ to the posterior mean $\mathbb{E} [\mathbf{x}_0 | \mathbf{x}_t]$, in order to guide the diffusion trajectory with an estimate of the final denoised sample $\mathbf{x}_0$.
However, this does not consider information from the measurement $\mathbf{y}$, which must then be integrated downstream. In this work, we propose to estimate the conditional posterior mean $\mathbb{E} [\mathbf{x}_0 | \mathbf{x}_t, \mathbf{y}]$, which can be formulated as the solution to a lightweight, single-parameter maximum likelihood estimation problem.
The resulting prediction can be integrated into any standard sampler, resulting in a fast and memory-efficient inverse solver. Our optimizer is amenable to a noise-aware likelihood-based stopping criteria that is robust to measurement noise in $\mathbf{y}$.
We demonstrate comparable or improved performance against a wide selection of contemporary inverse solvers across multiple datasets and tasks\footnote{For additional visualizations and results, please visit the \href{https://diffusion-conditional-sampling.github.io/}{project website}.}.
\end{abstract}

\begin{figure}[h!]
    \centering
    \includegraphics[width=0.45\textwidth]{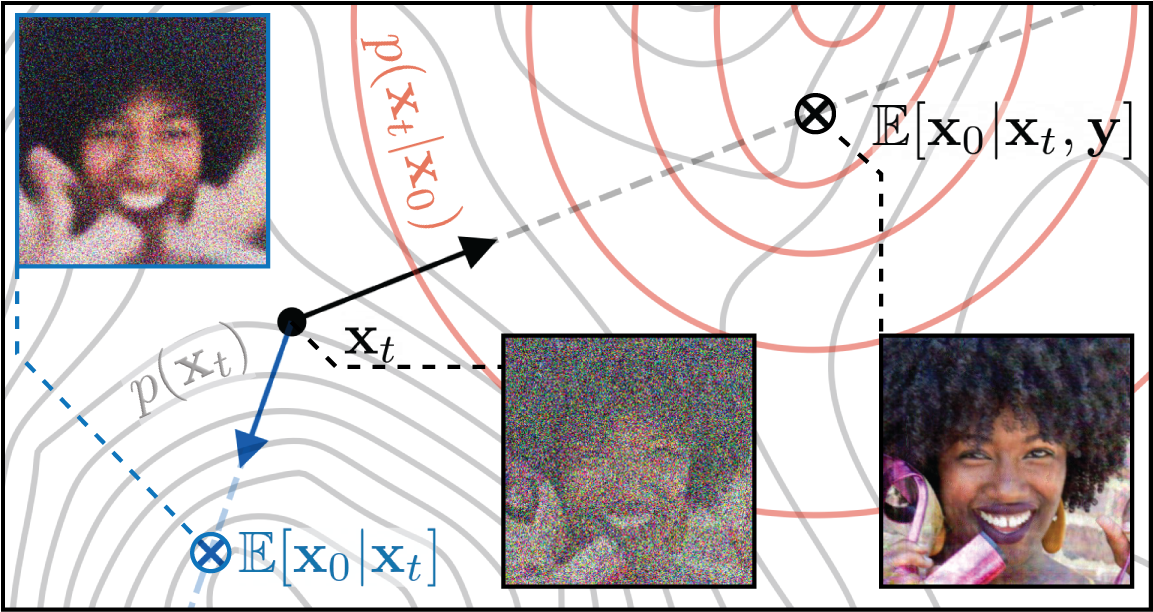}
    \caption{
    The posterior mean before and after conditioning on $\mathbf{y}$ for an inpainting inverse problem.
    }
    \label{fig:tweedies_folly_viz} 
\end{figure}

\section{INTRODUCTION}

In this work, we study a broad class of problems involving the recovery of a signal $\mathbf{x}$ from a measurement
\begin{equation}
    \mathbf{y} = \mathcal{A}(\mathbf{x}) + \boldsymbol\eta.
    \label{eq:intro_inverse_problem}
\end{equation}

with noise $\boldsymbol\eta$ and measurement operator $\mathcal{A}$. Known as inverse problems, such formulations appear in a multitude of fields, with applications including acoustic reconstruction \citep{kac1966can}, seismic profiling \citep{hardage1985vertical}, X-ray computed tomography and magnetic resonance imaging \citep{suetens2017fundamentals}, and a large number of computer vision reconstruction tasks such as inpainting, deconvolution, colorization, super-resolution, and phase retrieval \citep{andrews1977digital}.

\begin{figure*}
    \begin{minipage}[t]{.64\linewidth}
        \includegraphics[width=\linewidth]{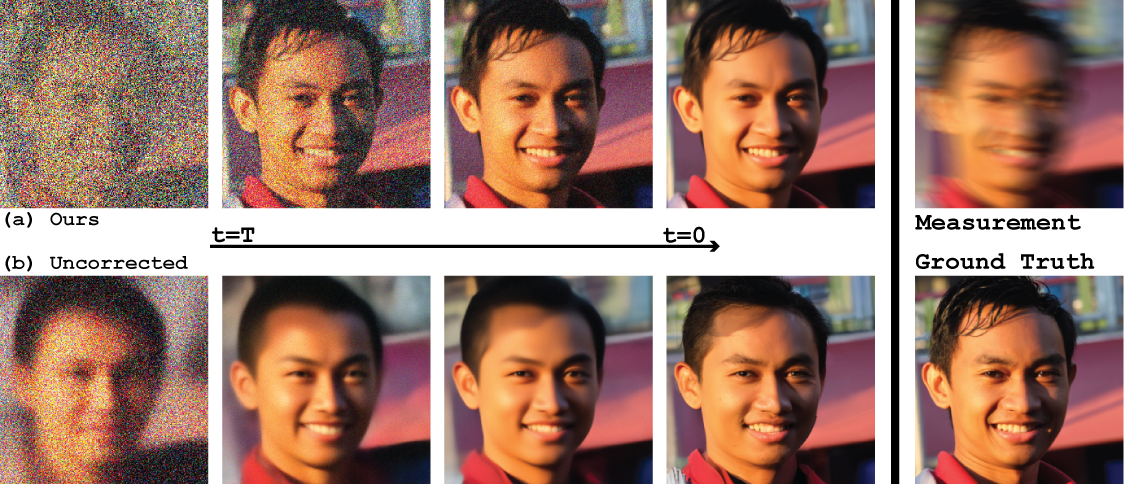}
        \caption{(a) $\mathbb{E}[\mathbf{x}_0|\mathbf{x}_t]$ via the unconditional score versus (b) $\mathbb{E}[\mathbf{x}_0|\mathbf{x}_t, \mathbf{y}]$ via the forward process score estimate obtained by our likelihood maximizer for an image in the FFHQ $256 \times 256$ dataset with motion blur applied. With the unconditional score, $\hat{\mathbf{x}}_0$ estimates the posterior mean of the dataset, rather than a sample $\mathbf{x}$ that satisfies $\mathcal{A}(\mathbf{x}) \approx \mathbf{y}$, especially at $T \gg 0$ (Section \ref{sec:tweedies_explained}).}
        \label{fig:corrected_uncorrected_example}
    \end{minipage}
    \hfill
    \begin{minipage}[t]{.32\linewidth}
        \centering
        \includegraphics[width=.82\linewidth]{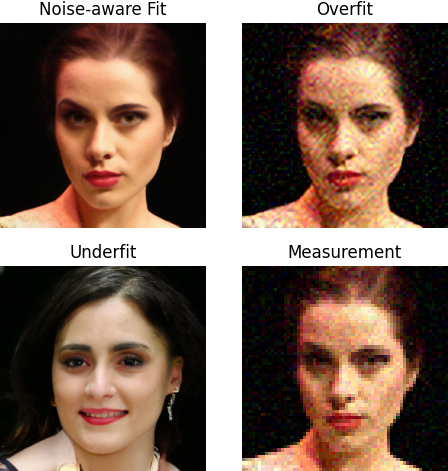}
        \caption{The hazard of over- or under-fitting for a super-resolution task. An ideal noise-aware fit balances between the prior and the noisy measurement $\mathbf{y}$.}
        \label{fig:overfit_underfit_example}
    \end{minipage}
\end{figure*}

\begin{figure}[h]
    \fontsize{9pt}{9pt}\selectfont
    \centering
    \resizebox{\linewidth}{!}{
        \begin{tabular}{l|lccc}
            \textbf{Solver} & \textbf{Type} & \begin{tabular}{@{}c@{}}\textbf{No NFE}  \\ \textbf{Backprop} \end{tabular}  & \textbf{Runtime} & \textbf{Memory}\\
            \hline
            \textbf{DCS} (Ours) & Hybrid & \cmark & \textbf{1x} & \textbf{1x}\\
            \midrule
            MCG &  Projection & \xmark &$2.6\mathbf{x}$ & $3.2\mathbf{x}$ \\
            DPS &  Posterior & \xmark &$2.5\mathbf{x}$ & $3.2\mathbf{x}$ \\
            DPS-JF & Posterior & \cmark & $1.2\mathbf{x}$ & $1.1\mathbf{x}$\\
            DDNM & Projection & \cmark &$1.5\mathbf{x}$ & $1\mathbf{x}$ \\
            RED-Diff & Projection & \cmark & 1.5$\mathbf{x}$ & 1$\mathbf{x}$ \\
            LGD-MC & Posterior & \xmark & 2$\mathbf{x}$ & $3.2\mathbf{x}$\\
            \bottomrule
        \end{tabular}
    }
    \captionof{table}{Overview of pixel-based solvers used for comparisons in this work. We list the type (Section \ref{sec:tweedies}), whether it requires backpropagation through a neural function evaluation, runtime, and memory footprint.}
    \label{table:inverse_solvers_overview}
\end{figure}

Often, inverting $\mathcal{A}$ is numerically intractable (Appendix \ref{sec:noninvertible_a}), meaning that solutions $\mathbf{x}$ satisfying $\mathcal{A}(\mathbf{x}) = \mathbf{y}$ are not directly obtainable or unique \citep{vogel2002computational}. Moreover, due to measurement noise, it is often possible, but not desirable to fit perfectly to $\mathbf{y}$ for risk of overfitting to $\boldsymbol \eta$ \citep{aster2018parameter}. Therefore, a fundamental step in solving inverse problems is deciding how to select the best option from an equivalence class of solutions, i.e., choosing $\mathbf{x}_* \in \{ \mathbf{x} : \mathcal{A}(\mathbf{x}) \approx \mathbf{y} \}$.

In classical solvers, this is carried out by a regularizer on a normed error loss \citep{engl1996regularization}:
\begin{equation}
    \mathbf{x}_* = \underset{\mathbf{x}}{\arg\min} \; R(\mathbf{x}) \hspace{.25in} \text{s.t.} \hspace{.1in} ||\mathcal{A}(\mathbf{x}) - \mathbf{y}|| \leq \epsilon,
    \label{eq:classical_approaches}
\end{equation}
where $\epsilon$ is a soft error margin and $R$ is a simple function that satisfies user-specified heuristics, e.g., smoothness or total variation \citep{beck2009fast}. However, such approaches often fail to produce realistic results, as it can be difficult to design $R$ without introducing strong biases. With the advent of deep generative models, practitioners found that restricting solutions to the range of a generative model $G$ can greatly improve realism: let $\mathbf{x} = G(\mathbf{w})$ and optimize over $\mathbf{w}$, which can be latent inputs \citep{bora2017compressed} or weights \citep{ulyanov2018deep} of a deep neural network. Overall, these methods improve the fidelity of $\mathbf{x}$, but they lack interpretability and require a judiciously selected $R$ and $\epsilon$.

\begin{figure*}[h!]
    \centering
    \includegraphics[width=.95\linewidth]{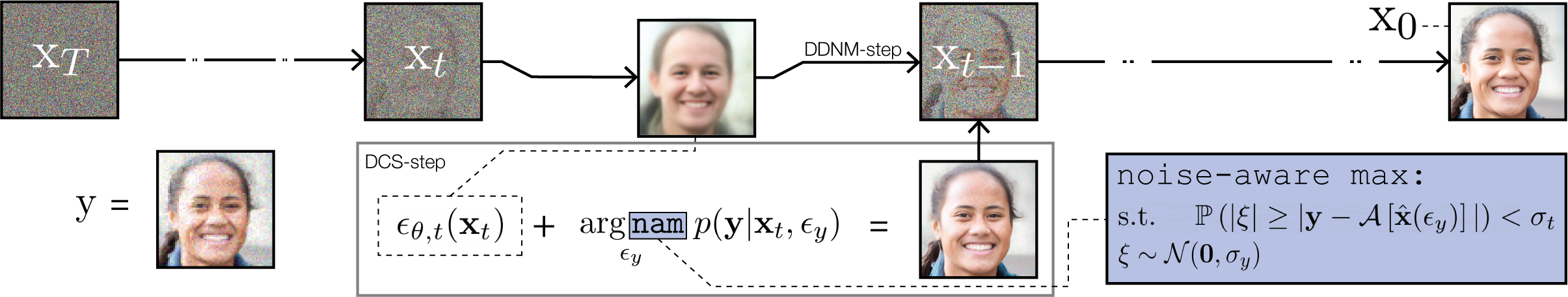}
    \caption{An illustration of our proposed sampling algorithm. An initial noise prediction $\epsilon_\theta$ is corrected by the solution $\epsilon_\mathbf{y}$ of a \textbf{n}oise-\textbf{a}ware \textbf{m}aximization scheme of the measurement likelihood $p(\mathbf{y} | \mathbf{x}_t, \epsilon_y)$. This results in the corrected forward process noise prediction $(\epsilon_\theta + \epsilon_y) \approx -\sigma_t^{-1} \nabla \log p_t(\mathbf{x}_t | \mathbf{x}_0)$. For details see Section \ref{sec:ours}.}
    \label{fig:main}
\end{figure*}

Recently, great strides have been made in solving inverse problems with diffusion models \citep{ho2020denoising}, which produce diverse, realistic samples \citep{dhariwal2021diffusion,esser2024scaling} with robust generalization guarantees \citep{kadkhodaie2023generalization}. Moreover, they are interpretable, directly modeling the (Stein) score $\nabla \log p_t$. Unconditional sampling proceeds by reversing a noising process on $\mathbf{x}_0 \sim p_{\text{data}}$. Solvers then employ a conditional sampling process via a \texttt{guidance} term that pushes samples toward solutions consistent with $\mathbf{y}$:
\begin{equation*}
	\mathbf{x}_{t-1} =  \overbrace{\texttt{denoise}[\mathbf{x}_t, \nabla \log p_t(\mathbf{x}_t)]}^{\texttt{unconditional sampling}} \; + \; \texttt{guidance}
\end{equation*}
 This approach faces a fundamental challenge: the \texttt{guidance} term depends on a consistency error $||\mathcal{A}(\mathbf{x}) - \mathbf{y}||$ that is only tractable for $\mathbf{x} = \mathbf{x}_0$ \citep{chung2022diffusion}. Such methods rely (explicitly or implicitly via \citep{song2020denoising}) on Tweedie's formula, which estimates $\mathbf{x}_0$ given a noise prediction $\bm{\epsilon}_t \approx -\sigma_t \nabla \log p_t(\mathbf{x}_t)$:
\begin{equation}
	\hat{\mathbf{x}}_0 = \mathbb{E} \left [ \mathbf{x}_0 | \mathbf{x}_t \right ]= \frac{1}{\sqrt{\alpha_t}}\left(\mathbf{x}_t - \sigma_t \bm{\epsilon}_t\right).
	\label{eq:tweedies_intro}
\end{equation}
This approximation is then substituted for $\mathbf{x}$ in the consistency error, producing a differentiable function with respect to $\mathbf{x}_t$.

This naive implementation of Eq. \ref{eq:tweedies_intro} introduces significant approximation error, as $\mathbb{E}[\mathbf{x}_0 | \mathbf{x}_t] = \mathbf{x}_0$ if and only if the marginal distribution of $\mathbf{x}_t$ is normally distributed (Theorem \ref{thm:tweedie_iff}). Existing methods use the unconditional score $\nabla \log p_t(\mathbf{x}_t)$ where this assumption does not generally hold. In this work, we instead propose to use the conditional posterior mean $\mathbb{E} \left [ \mathbf{x}_0 | \mathbf{x}_t,\, \mathbf{y} \right ]$ as an estimate for $\mathbf{x}_0$ which requires estimating the forward process score, $\nabla \log p_t (\mathbf{x}_t | \mathbf{x}_0 )$. While the forward process score is intractable during the reverse process (inference), since $\mathbf{x}_0$ is unknown, we can utilize the information contained in $\mathbf{y}$ in a statistically sufficient manner to obtain an estimate (Section \ref{sec:ours}).

\noindent Our \textbf{contributions} are as follows:
\begin{itemize}
    \item We identify a limitation with the use of $\mathbb{E}[\mathbf{x}_0 | \mathbf{x}_t]$ to predict $\mathbf{x}_0$ in inverse problems: the approximation is only exact when $\mathbf{x}_t$ is normally distributed.
    \item We sidestep this by considering the forward process score, $\nabla \log p_t(\mathbf{x}_t | \mathbf{x}_0)$ which we are able to predict using a maximum likelihood estimator. We show theoretically that our estimate is statistically sufficient to the measurement $\mathbf{y}$, and empirically that it is robust to high levels and different types of measurement noise (Figure \ref{fig:noisy_comparison}, Table \ref{table:ffhq_alt_noise}).
    \item We demonstrate how this score can be plugged into any standard sampler (e.g., DDPM), resulting in an algorithm that is simple, noise-robust, neural backpropagation-free, and stable across time steps. Moreover, it achieves improvements in performance on a large selection of inverse problems, datasets and noise levels \footnote{Code for method and experiments can be found \href{https://github.com/diffusion-conditional-sampling/diffusion_conditional_sampling}{here}.}.
\end{itemize}

\section{REVISITING TWEEDIE'S FOR INVERSE PROBLEMS} \label{sec:tweedies_explained}

\begin{figure*}
    \includegraphics[width=\linewidth]{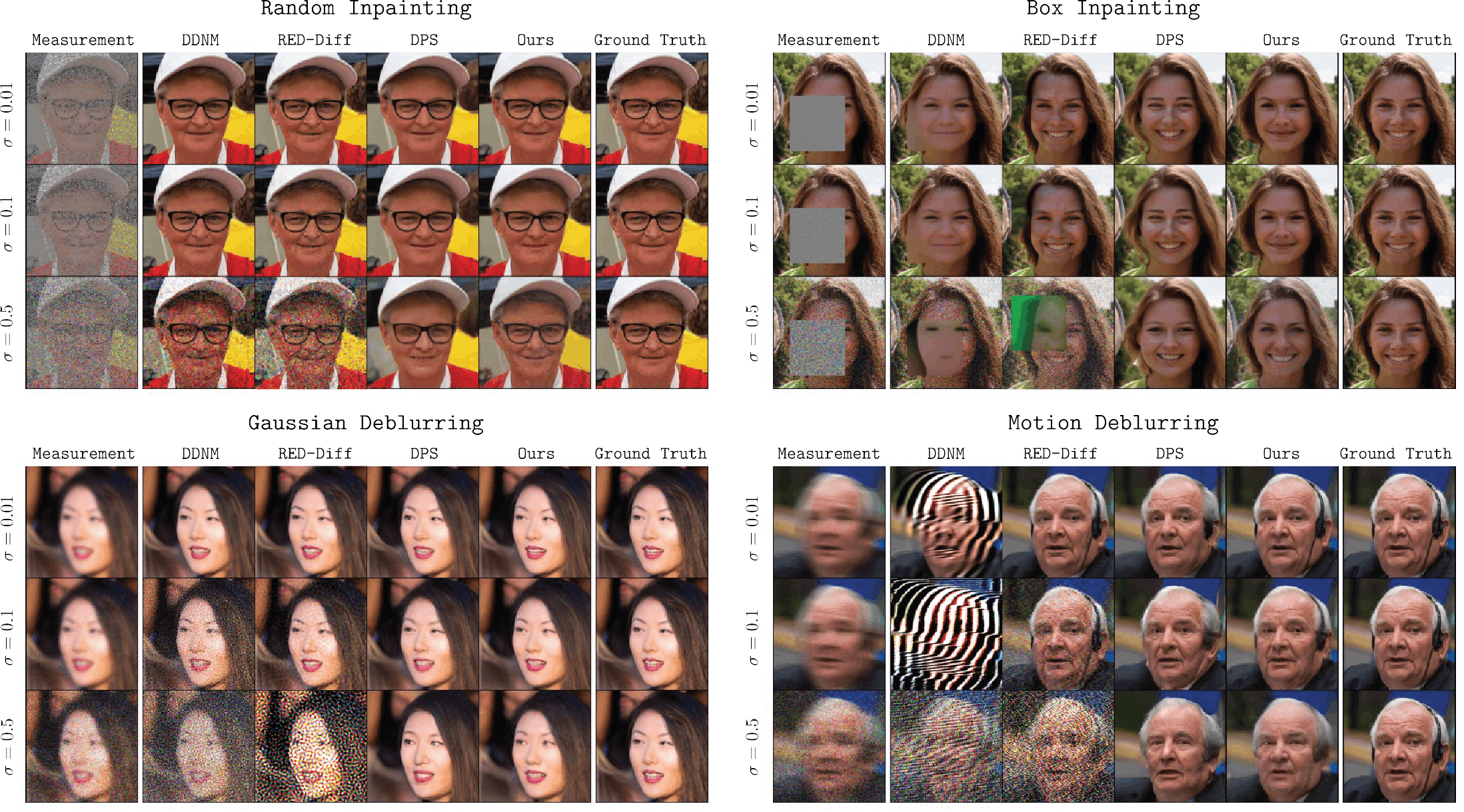}
    \caption{Reconstruction quality at various noise levels $\sigma_\mathbf{y} \in \{0.01, 0.1, 0.5\}$. Our approach strikes a careful balance between quality at each noise level (Table \ref{table:main_quantitative}) and computational cost (Table \ref{table:inverse_solvers_overview}). More examples in Appendix \ref{sec:qualitative}.}
    \label{fig:noisy_comparison}
\end{figure*}

Diffusion models \cite{ho2020denoising} reverse a noise-corrupting forward process, with marginals $\mathbf{x}_t \sim p_t(\mathbf{x}_t | \mathbf{x}_0)$, defined as
\begin{equation}
    \label{eq:xt_marginal}
    \mathbf{x}_t = \sqrt{\alpha_t} \mathbf{x}_0 + \underbrace{\sqrt{1 - \alpha_t}}_{\sigma_t} \mathbf{z}, \hspace{.1in} \mathbf{z} \sim \mathcal{N}(\mathbf{0}, \mathbf{I}),
\end{equation}
and parameterized by a monotonically time-decreasing scalar $\alpha_t$. New samples are generated via the reverse diffusion process which leverages the learned score function $s_\theta = -\sigma_t^{-1} \boldsymbol\epsilon_\theta \approx \nabla\log p_t(\mathbf{x}_t)$ \citep{anderson1982reverse, vincent2011connection, song2020score}. Diffusion-based solvers for inverse problems can be categorized as \textbf{posterior} or \textbf{projection} solvers, and aim to modify the reverse process such that the final variate $\mathbf{x}_0$ coincides with an element of the solution set $\{\mathbf{x}_0: \mathcal{A}(\mathbf{x}_0) \approx \mathbf{y}\}$\footnote{We defer a more extensive discussion on diffusion models diffusion-based inverse problem solvers to Appendix \ref{sec:background}.}. This paradigm is afflicted by a fundamental computability paradox: since the consistency error is only explicitly known at $t=0$ by applying the likelihood function
\begin{equation}
	p(\mathbf{y}|\mathbf{x}_0) 
    \propto \exp\left(-\frac{1}{2\sigma_\mathbf{y}^2} || \mathbf{y} - \mathcal{A}(\mathbf{x}_0) ||_2^2\right),
    \label{eq:likelihood_fn}
\end{equation}
we cannot exactly guide the diffusion process at time $t > 0$ without first solving for $\mathbf{x}_0$. Simultaneously, we cannot generally obtain $\mathbf{x}_0$ without first computing $\mathbf{x}_t$. Accurately estimating $\hat{\mathbf{x}}_0 \approx \mathbf{x}_0$ is a fundamental challenge both types of solvers must contend with to function properly.

In \textbf{posterior} solvers, $\nabla \log p(\mathbf{y} | \mathbf{x}_t)$, is approximated by $\nabla \log p(\mathbf{y} | \hat{\mathbf{x}}_0)$. In \textbf{projection} solvers, this approximation is hidden in the projection step $\mathbf{P}\mathbf{x}_t$, which is driven by a projection on $\hat{\mathbf{x}}_0$, followed by a DDIM step \citep{song2020denoising} that involves $\hat{\mathbf{x}}_0$. In both cases, Tweedie's formula is used to create an estimate for $\mathbf{x}_0$, given the current $\mathbf{x}_t$ by predicting the posterior mean
\begin{equation}
    \mathbb{E}[\mathbf{x}_0 | \mathbf{x}_t] = \int_{\mathbb{R}^d} \mathbf{x}_0 p(\mathbf{x}_0 | \mathbf{x}_t) d\mathbf{x}_0.
    \label{eq:posterior_mean}
\end{equation}
A fundamental limitation of this estimator is that $\mathbb{E}[\mathbf{x}_0 | \mathbf{x}_t]$ only coincides with $\mathbf{x}_0$ when $\mathbf{x}_t$ is marginally normally distributed. We formalize this necessary and sufficient condition below:
\begin{theorem}
    Let $\mathbf{x}_t$ be sampled from a diffusion process (as in Eq. \ref{eq:xt_marginal}). $\mathbb{E}[\mathbf{x}_0 | \mathbf{x}_t] = \mathbf{x}_0$ if and only if $p(\mathbf{x}_t)$ is a simple isotropic Gaussian with mean $\sqrt{\alpha_t} \mathbf{x}_0$ and variance $\sigma_t \mathbf{I}$.
    \label{thm:tweedie_iff}
\end{theorem}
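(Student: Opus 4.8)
The plan is to route the entire equivalence through the marginal score $\nabla \log p_t$, using Tweedie's formula (Eq. \ref{eq:tweedies_intro}) as the bridge. Writing that formula in score form, $\mathbb{E}[\mathbf{x}_0 \mid \mathbf{x}_t] = \tfrac{1}{\sqrt{\alpha_t}}\bigl(\mathbf{x}_t + \sigma_t^2\,\nabla \log p_t(\mathbf{x}_t)\bigr)$ (which is exactly Eq. \ref{eq:tweedies_intro} under $\boldsymbol\epsilon_t = -\sigma_t \nabla \log p_t$), I would first reduce the claimed denoising identity to a pointwise condition on the score. Setting the right-hand side equal to $\mathbf{x}_0$ and solving gives the equivalent statement
\[ \mathbb{E}[\mathbf{x}_0 \mid \mathbf{x}_t] = \mathbf{x}_0 \iff \nabla \log p_t(\mathbf{x}_t) = \frac{\sqrt{\alpha_t}\,\mathbf{x}_0 - \mathbf{x}_t}{\sigma_t^2}. \]
Everything then hinges on recognizing the right-hand field as the score of the target Gaussian, with $\sigma_t^2 = 1 - \alpha_t$ the forward-process variance of Eq. \ref{eq:xt_marginal}.

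For the ($\Leftarrow$) direction this is immediate: the density $\mathcal{N}(\sqrt{\alpha_t}\mathbf{x}_0,\,\sigma_t^2 \mathbf{I})$ — i.e. the forward kernel $p_t(\mathbf{x}_t \mid \mathbf{x}_0)$ of Eq. \ref{eq:xt_marginal} — has score $-(\mathbf{x} - \sqrt{\alpha_t}\mathbf{x}_0)/\sigma_t^2$, which matches the field above. Substituting this back into the score form of Tweedie collapses the expression to $\tfrac{1}{\sqrt{\alpha_t}}(\sqrt{\alpha_t}\mathbf{x}_0) = \mathbf{x}_0$, so assuming $p_t$ is the stated Gaussian forces the posterior mean to recover $\mathbf{x}_0$ exactly.

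The ($\Rightarrow$) direction is where the real content lies. I would read $\mathbb{E}[\mathbf{x}_0 \mid \mathbf{x}_t] = \mathbf{x}_0$ as a functional identity holding for almost every $\mathbf{x}_t$, which is legitimate because $p_t$, being $p_0$ convolved with a Gaussian, is smooth and strictly positive on all of $\mathbb{R}^d$. The reduction above then says $\nabla \log p_t$ equals the conservative field $(\sqrt{\alpha_t}\mathbf{x}_0 - \mathbf{x})/\sigma_t^2 = \nabla\bigl(-\tfrac{1}{2\sigma_t^2}\|\mathbf{x} - \sqrt{\alpha_t}\mathbf{x}_0\|^2\bigr)$ everywhere. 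Integrating this gradient over the connected domain $\mathbb{R}^d$ yields $\log p_t(\mathbf{x}) = -\tfrac{1}{2\sigma_t^2}\|\mathbf{x} - \sqrt{\alpha_t}\mathbf{x}_0\|^2 + C$, and the normalization constraint $\int p_t = 1$ pins down $C$, giving exactly $p_t = \mathcal{N}(\sqrt{\alpha_t}\mathbf{x}_0, \sigma_t^2 \mathbf{I})$.

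The main obstacle is the logical quantification together with the uniqueness-of-density-from-score step: recovering $p_t$ requires that two densities with a common score differ only by an additive constant in log-space, which in turn needs the positivity and connectedness that the Gaussian-smoothed marginal conveniently supplies. As a consistency check on this reasoning, I would note the measure-theoretic shortcut — $\mathbb{E}[\mathbf{x}_0 \mid \mathbf{x}_t] = \mathbf{x}_0$ almost surely makes $\mathbf{x}_0$ a $\sigma(\mathbf{x}_t)$-measurable (hence deterministic) function of $\mathbf{x}_t$; since the forward kernel injects full-support Gaussian noise, distinct atoms of $p_0$ would produce overlapping conditionals and preclude such a function, so $p_0$ must be a point mass at $\mathbf{x}_0$, which reproduces the same Gaussian marginal.
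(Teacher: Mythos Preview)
Your proposal is correct and follows essentially the same route as the paper: both directions are handled by rewriting Tweedie's formula in score form, reducing $\mathbb{E}[\mathbf{x}_0 \mid \mathbf{x}_t] = \mathbf{x}_0$ to the pointwise identity $\nabla \log p_t(\mathbf{x}_t) = (\sqrt{\alpha_t}\,\mathbf{x}_0 - \mathbf{x}_t)/\sigma_t^2$, verifying ($\Leftarrow$) by direct computation of the Gaussian score, and obtaining ($\Rightarrow$) by anti-differentiating that field to recover the log-density up to a normalizing constant. You are somewhat more careful than the paper in justifying the integration step (positivity and connectedness of the Gaussian-smoothed marginal), and your measure-theoretic aside is a pleasant alternative for the forward implication, but the core argument coincides with the paper's sufficiency/necessity lemma pair.
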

Note that $\mathbf{x}_t$ is almost never Gaussian, since $\mathbf{x}_t$ is distributed as $\phi_\sigma \ast p_\text{data}$. While $\phi_\sigma$ is a simple isotropic Gaussian, $p_\text{data}$ is not --- the data distribution can be arbitrarily non-convex and multimodal.

We visualize this in Figure $\ref{fig:corrected_uncorrected_example}$, where at larger values of $t$, the fidelity of the estimated $\mathbf{x}_0$ is poor, resulting in a low quality prediction that is inconsistent with the measurement $\mathbf{y}$. Ultimately, Eq. \ref{eq:posterior_mean} is a weighted average over \textit{all} data $\mathbf{x} \sim p_\text{data}$, and often cannot properly estimate $\mathbf{x}_0$ without incorporating measurement information from $\mathbf{y}$. Instead, we propose to use the conditional posterior mean
\begin{equation}
    \mathbb{E}[\mathbf{x}_0 | \mathbf{x}_t, \mathbf{y}] = \int_{\mathbb{R}^d} \mathbf{x}_0 p(\mathbf{x}_0 | \mathbf{x}_t, \mathbf{y}) d\mathbf{x}_0,
    \label{eq:cond_posterior_mean}
\end{equation}
which only considers those $\mathbf{x}_0 \sim p_\text{data}$ consistent with $\mathbf{y}$. In the following section, we outline a method for directly incorporating this conditional information into the estimate.

\begin{figure*}[!t]
\begin{minipage}{0.49\textwidth}
\begin{algorithm}[H]
    \centering
    \caption{Diffusion Conditional Sampler (\textbf{DCS})}\label{algorithm}
    \begin{algorithmic}[1]
    \STATE {\bfseries Input:} $\mathbf{y}, \mathcal{A}, \epsilon_\theta$ \hspace{.2in} $|$ {\bfseries Output:} $\mathbf{x}_0$
    \STATE $\mathbf{x}_T \sim \mathcal{N}(\textbf{0}, \mathbf{I})$
    \FOR{$t=T$ {\bfseries to} $1$}    
        \STATE $\epsilon \leftarrow \epsilon_\theta(\mathbf{x}_t)$
        \STATE ${\epsilon_\mathbf{y}} \leftarrow \underset{{\epsilon_\mathbf{y}}}{\arg \texttt{nam}} \; p_t\left(\mathbf{y} | \frac{\mathbf{x}_t - \sigma_t (\epsilon_\theta + \epsilon_\mathbf{y})}{\sqrt{\alpha_t}}\right)$
        \STATE $\mathbf{x}_{t-1} \leftarrow \texttt{ddpm\_step}(\mathbf{x}_t, \epsilon + \epsilon_\mathbf{y})$
    \ENDFOR
    \end{algorithmic}
    \label{alg:ics}
\end{algorithm}
\end{minipage}
\hfill
\begin{minipage}{0.49\textwidth}
\begin{algorithm}[H]
    \centering
    \caption{Noise-aware Maximization (\texttt{nam})}\label{algorithm1}
    \begin{algorithmic}[1]
    \STATE {\bfseries Input:} $\mathbf{y}, \mathcal{A}, \mathbf{x}_t, \boldsymbol{\epsilon}$  \hspace{.15in} $|$ {\bfseries Output:} $\epsilon_y$
    \STATE $\epsilon_y \gets \mathbf{0}$
    \STATE $\hat{\mathbf{x}} \gets \texttt{Tweedie's}(\mathbf{x}_t, \boldsymbol{\epsilon} + \epsilon_y)$
    \WHILE{$2\Phi[-||y - \mathcal{A}[\hat{\mathbf{x}}]||_1/(d\sigma_\mathbf{y})] < \sigma_t$} 
        \vspace{.05in}
        \STATE $\epsilon_y \gets \epsilon_y + \eta \nabla \log p_t\left(\mathbf{y} | \frac{\mathbf{x}_t - \sigma_t (\boldsymbol{\epsilon} + \epsilon_\mathbf{y})}{\sqrt{\alpha_t}}\right)$
        \vspace{.01in}
        \STATE $\hat{\mathbf{x}} \gets \texttt{Tweedie's}(\mathbf{x}_t, \boldsymbol{\epsilon} + \epsilon_y)$
    \ENDWHILE
    \end{algorithmic}
    \label{alg:nam}
\end{algorithm}
\end{minipage}
\end{figure*}

\section{DIFFUSION CONDITIONAL SAMPLING} \label{sec:ours} We propose Diffusion Conditional Sampling (\textbf{DCS}): a novel framework for solving inverse problems with diffusion models via a measurement consistent Tweedie's formula (Eq. \ref{eq:cond_posterior_mean}). At each step, we form a single-parameter measurement model whose maximum likelihood estimator approximates the forward process score $\nabla \log p_t(\mathbf{x}_t | \mathbf{x}_0)$ (Section \ref{sec:ours_likelihood}). This estimator is optimized with a noise-robust, likelihood-based early stopping criterion (Section \ref{sec:ours_nam}). The learned score is then input to a standard DDPM sampler \citep{ho2020denoising}, resulting in Algorithm \ref{alg:ics}. This approach is motivated by both powerful theoretical guarantees (Section \ref{sec:ours_theory}), and yields significant computational advantages (Section \ref{sec:efficiency}).

\subsection{Measurement Likelihood Model}
\label{sec:ours_likelihood}
In our setting, we wish to reverse a diffusion process originating from a single fixed $\mathbf{x}_0$ --- the desired signal $\mathbf{x}$ (Eq. \ref{eq:intro_inverse_problem}), where
\begin{equation}
	p_*(\mathbf{x}_t) = \mathcal{N}(\mathbf{x}_t; \sqrt{\alpha_t} \mathbf{x}_0, (1 - \alpha_t) \mathbf{I}).
    \label{eq:single}
\end{equation}
In standard unconditional sampling, this is also known as the forward process $p_t(\mathbf{x}_t | \mathbf{x}_0)$ \citep{ho2020denoising}. However, when running the reverse process (unconditional sampling) this is a non-trivial quantity to evaluate, as $\mathbf{x}_0$ is unknown. We apply Tweedie's to Eq. \ref{eq:single} and solve for the forward process score via a closed form expression for the likelihood (Eq. \ref{eq:likelihood_fn}):
\begin{align} 
    \log\  &p(\mathbf{y}|\mathbf{x}_0({\epsilon_\mathbf{y}}, \mathbf{x}_t)) \propto \nonumber\\
    &-\frac{1}{2\sigma_\mathbf{y}^2} \left|\left| \mathbf{y} - \mathcal{A}\left(\frac{1}{\sqrt{\alpha_t}}[\mathbf{x}_t + \sigma_t^2 \underbrace{\nabla_{\mathbf{x}_t} \log p_*(\mathbf{x}_t)}_{\nabla \log p_t(\mathbf{x}_t | \mathbf{x}_0)}]\right) \right|\right|_2^2
    \label{eq:our_likelihood}
\end{align}

Since $p_t(\mathbf{x}_t | \mathbf{x}_0)$ is distributed as an isotropic Gaussian by construction, knowledge of the true conditional posterior mean would recover $\mathbf{x}_0$ exactly (Theorem \ref{thm:tweedie_iff}). Introducing the parameterization,
\begin{equation}
    \nabla \log p_{t}(\mathbf{x}_t | \mathbf{x}_0) = -\sigma_t^{-1}[\epsilon_\theta(\mathbf{x}_t, t) + \epsilon_\mathbf{y}],
\end{equation}
we can solve for our single parameter $\epsilon_y$ by maximizing the joint likelihood between the measurement $\mathbf{y}$ and our parameter ${\epsilon_\mathbf{y}}$. This forms our estimator for $\nabla \log p_t(\mathbf{x}_t | \mathbf{x}_0)$:
\begin{align}
\label{eq:data_conditional_correction}
    s_\text{corrected} = -\sigma_t^{-1} & \left [ \epsilon_\theta(\mathbf{x}_t, t) + \epsilon_\mathbf{y}^* \right ] ,
\end{align}
where
\begin{align}
    \label{eq:dcs_mle}
    \epsilon_\mathbf{y}^* &=\\
    =&\underset{\epsilon_\mathbf{y}}{\arg\max}\; \frac{-1}{2\sigma_\mathbf{y}^2} \left|\left| \mathbf{y} - \mathcal{A}\left(\frac{\mathbf{x}_t - \sigma_t [\epsilon_\theta(\mathbf{x}_t, t) + \epsilon_\mathbf{y}]}{\sqrt{\alpha_t}}\right) \right|\right|_2^2.
\end{align}
Finally, $s_\text{corrected}$ can then be input to any standard diffusion model sampler. 

Now we turn to estimating $\epsilon_\mathbf{y}^*$. Given the often noisy and ill-posed nature of Eq. \ref{eq:dcs_mle} (Appendix \ref{sec:noninvertible_a}), we seek to select from the solution set $\{\epsilon_\mathbf{y} : \mathcal{A}[\hat{\mathbf{x}}_0] \approx \mathbf{y} \}$ through a noise-aware maximization algorithm, which we outline below.

\begin{figure*}[h!]
\centering
\begin{minipage}{0.48\textwidth}
\includegraphics[width=\linewidth]{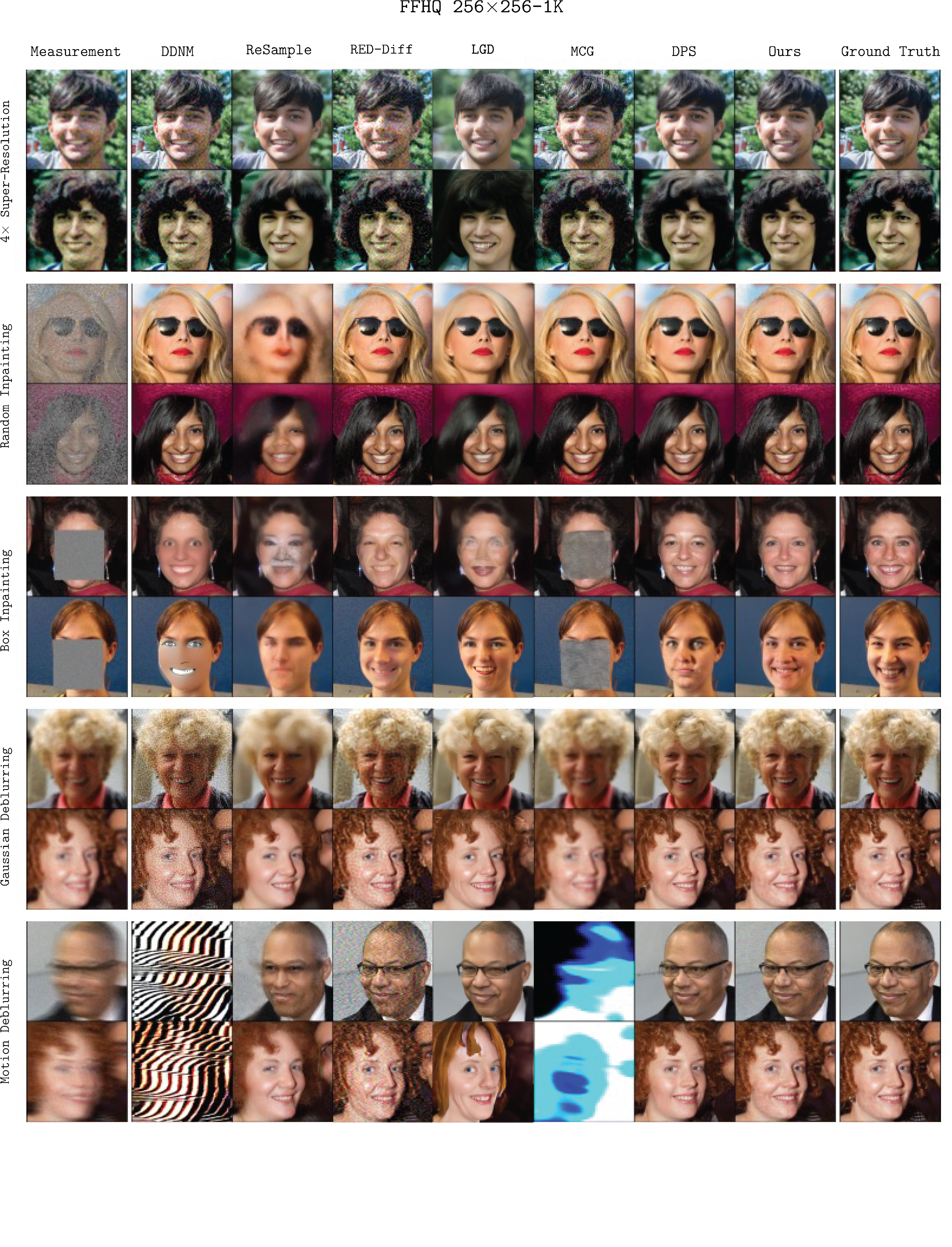}
\end{minipage}
\hspace{.1in}
\begin{minipage}{0.48\textwidth}
\includegraphics[width=\linewidth]{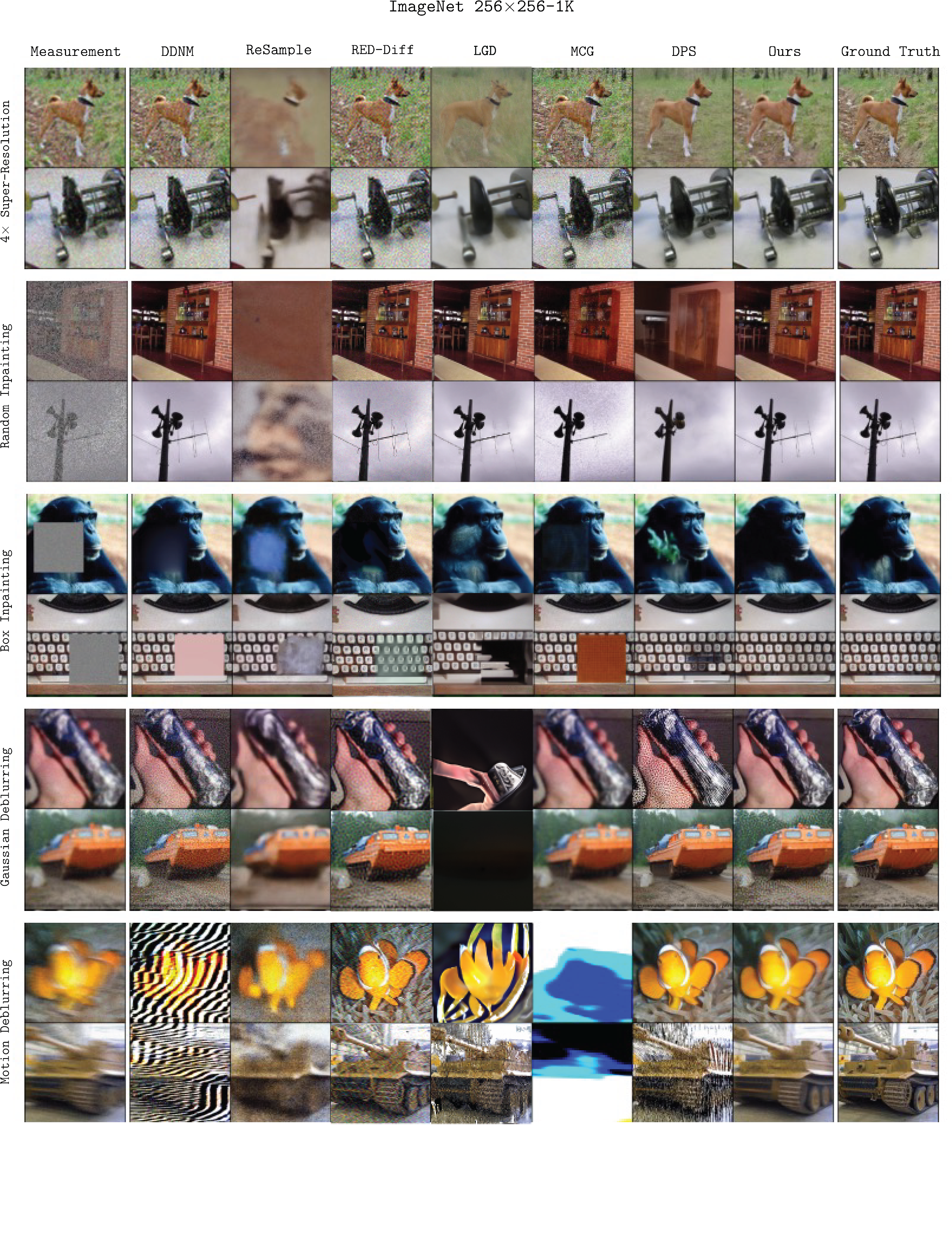}
\end{minipage}
\caption{Qualitative comparison of our proposed method against related work on FFHQ 256$\times$256-1K (left) and ImageNet 256$\times$256-1K (right). Further comparisons can be found in Appendix \ref{sec:qualitative}.}
\label{fig:qualitative}
\end{figure*}

\subsection{Noise-Aware Maximization}
\label{sec:ours_nam}
We propose a \textbf{n}oise-\textbf{a}ware \textbf{m}aximization scheme (\texttt{nam}) to improve stability across noise levels. 
Given a single noisy measurement $\mathbf{y} = \mathcal{A}[\mathbf{x}] + \boldsymbol{\eta}$, there is a high risk of overfitting to noise $\boldsymbol{\eta}$ (Figures \ref{fig:overfit_underfit_example} and \ref{fig:noisy_comparison}). To mitigate this problem, we propose a maximization scheme with a specialized early stopping criterion based on the measurement likelihood. We leverage the intuition that the corrected forward process score should yield a prediction where a vector of residuals,
\begin{equation}
    \texttt{res} = \mathbf{y} - \mathcal{A}[\hat{\mathbf{x}}_0]
\end{equation}
should be i.i.d. normally distributed with variance $\sigma_\mathbf{y}^2$. In other words, each index of \texttt{res} should come from the same distribution as an index of $\boldsymbol{\eta}$. Let this be the null hypothesis $\mathbb{H}_0$ --- we seek to end the likelihood maximization process as soon as $\mathbb{H}_0$ holds.

Formally, we optimize Eq. \ref{eq:our_likelihood} until the likelihood of the alternate hypothesis, $\mathbb{H}_1$ (that \texttt{res} is \textit{not} distributed as $\eta$),  is below a desired threshold $p_\text{critical}$. Since overfitting is more problematic at the end of sampling ($t \approx 0$) than the beginning of sampling ($t \approx T$), we set $p_\text{critical}$ dynamically as a function of $t$, namely $p_\text{critical}(t) = \sigma_t$. This scheme is heavily inspired by the classical two-sided $\mathbf{z}$-test \citep{hogg2013introduction} with $d$ samples, where $d$ is the dimensionality of the image. We use the early-stopping criterion at each time $t$
\begin{equation} \label{eq:nam_test}
    \mathbb{P}(|\xi| > |\texttt{res}| \big| \mathbb{H}_0) = 2\Phi(-|\texttt{res}| / \sigma_\mathbf{y}) < \sigma_t,
\end{equation}
where $\xi_i \overset{\text{iid}}{\sim} \mathcal{N}(\mathbf{0}, \sigma_\mathbf{y}^2)$ and $\Phi$ is the CDF of a standard normal distribution. This differs from a classical $\mathbf{z}$-test since we are not seeking to reject the null hypothesis, but optimizing until the null hypothesis can no longer be rejected with sufficiently high probability. The full noise-aware maximization algorithm can be summarized by Algorithm \ref{alg:nam}. Since our loss function (Eq. \ref{eq:our_likelihood}) is quadratic, our proposed \texttt{nam} has worst-case linear convergence guarantees due to classical results with gradient descent \citep{boyd2004convex,ryu2016primer} with linear $\mathcal{A}$.

\subsection{Theory}
\label{sec:ours_theory}
We highlight two key theoretical properties of our sampler, the correctness of the Tweedie's approximation and the sufficiency of the resulting score with respect to $\mathbf{y}$.

\paragraph{Correctness} Our algorithm makes use of the conditional variant of Tweedie's formula, which obeys a very similar set of rules as Tweedie's (Theorem \ref{thm:tweedie_iff}).
\begin{theorem}
    Let $\mathbf{x}_t$ be sampled from a conditional diffusion process given $\mathbf{y}$ (as in Eq. \ref{eq:xt_marginal}, with $\mathbf{x}_0 \sim p(\mathbf{x}_0 | \mathbf{y})$). $\mathbb{E}[\mathbf{x}_0 | \mathbf{x}_t, \mathbf{y}] = \mathbf{x}_0$ if and only if $p(\mathbf{x}_t | \mathbf{x}_0)$ is a simple isotropic Gaussian with mean $\sqrt{\alpha_t} \mathbf{x}_0$ and variance $\sigma_t \mathbf{I}$.
    \label{thm:tweedie_cond_iff}
\end{theorem}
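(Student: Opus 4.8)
The plan is to reduce Theorem \ref{thm:tweedie_cond_iff} to Theorem \ref{thm:tweedie_iff} by observing that conditioning on $\mathbf{y}$ only swaps out the distribution over $\mathbf{x}_0$ while leaving the Gaussian noising kernel of Eq. \ref{eq:xt_marginal} intact. The structural fact I would establish first is the conditional independence $\mathbf{x}_t \perp \mathbf{y} \mid \mathbf{x}_0$: since $\mathbf{y} = \mathcal{A}(\mathbf{x}_0) + \boldsymbol\eta$ depends only on $\mathbf{x}_0$ and on measurement noise $\boldsymbol\eta$ that is independent of the diffusion noise $\mathbf{z}$, the noising kernel is unaffected by the measurement, i.e. $p(\mathbf{x}_t \mid \mathbf{x}_0, \mathbf{y}) = p(\mathbf{x}_t \mid \mathbf{x}_0) = \mathcal{N}(\mathbf{x}_t; \sqrt{\alpha_t}\mathbf{x}_0, \sigma_t^2 \mathbf{I})$. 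Consequently the conditional sampling setup is formally identical to the unconditional one, with the prior $p_\text{data}(\mathbf{x}_0)$ replaced by the posterior $p(\mathbf{x}_0 \mid \mathbf{y})$ and the marginal $p_t(\mathbf{x}_t)$ replaced by the conditional marginal $p(\mathbf{x}_t \mid \mathbf{y}) = \int p(\mathbf{x}_t \mid \mathbf{x}_0)\, p(\mathbf{x}_0 \mid \mathbf{y})\, d\mathbf{x}_0$.

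The second step is to derive the conditional Tweedie identity. Applying Bayes' rule $p(\mathbf{x}_0 \mid \mathbf{x}_t, \mathbf{y}) \propto p(\mathbf{x}_t \mid \mathbf{x}_0)\, p(\mathbf{x}_0 \mid \mathbf{y})$ and differentiating $\log p(\mathbf{x}_t \mid \mathbf{y})$ under the integral, the Gaussian kernel score $\nabla_{\mathbf{x}_t}\log p(\mathbf{x}_t \mid \mathbf{x}_0) = -\sigma_t^{-2}(\mathbf{x}_t - \sqrt{\alpha_t}\mathbf{x}_0)$ yields
\begin{equation}
    \mathbb{E}[\mathbf{x}_0 \mid \mathbf{x}_t, \mathbf{y}] = \frac{1}{\sqrt{\alpha_t}}\Big(\mathbf{x}_t + \sigma_t^2\, \nabla_{\mathbf{x}_t}\log p(\mathbf{x}_t \mid \mathbf{y})\Big),
\end{equation}
which is exactly Eq. \ref{eq:tweedies_intro} with the conditional marginal in the score. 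This reduces the claim to analyzing when this expression collapses to the fixed value $\mathbf{x}_0$.

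With the identity in hand, both directions follow the proof of Theorem \ref{thm:tweedie_iff} verbatim. For ($\Leftarrow$), substituting the assumed form $p(\mathbf{x}_t \mid \mathbf{y}) = \mathcal{N}(\sqrt{\alpha_t}\mathbf{x}_0, \sigma_t^2 \mathbf{I})$, whose score is $-\sigma_t^{-2}(\mathbf{x}_t - \sqrt{\alpha_t}\mathbf{x}_0)$, causes the two terms inside the parentheses to telescope, leaving $\mathbb{E}[\mathbf{x}_0 \mid \mathbf{x}_t, \mathbf{y}] = \mathbf{x}_0$. For ($\Rightarrow$), setting the left-hand side equal to the constant $\mathbf{x}_0$ and solving for the score gives $\nabla_{\mathbf{x}_t}\log p(\mathbf{x}_t \mid \mathbf{y}) = -\sigma_t^{-2}(\mathbf{x}_t - \sqrt{\alpha_t}\mathbf{x}_0)$; integrating this conservative gradient field recovers $\log p(\mathbf{x}_t \mid \mathbf{y}) = -\tfrac{1}{2\sigma_t^2}\|\mathbf{x}_t - \sqrt{\alpha_t}\mathbf{x}_0\|^2 + C$, and the normalization constraint on the density fixes $C$, forcing $p(\mathbf{x}_t \mid \mathbf{y})$ to be precisely the claimed isotropic Gaussian.

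The main obstacle is the rigor of the ($\Rightarrow$) integration step together with its measure-theoretic caveats. The score equation only constrains $\nabla_{\mathbf{x}_t}\log p(\mathbf{x}_t \mid \mathbf{y})$ on the support of the conditional marginal, so I would first argue that this support is all of $\mathbb{R}^d$ — which holds because the full-support Gaussian kernel is mixed against the nondegenerate posterior $p(\mathbf{x}_0 \mid \mathbf{y})$ — guaranteeing the recovered potential is a single global Gaussian rather than a piecewise one. The remaining technical checkpoints (differentiation under the integral via dominated convergence, justified by finite second moments of $p(\mathbf{x}_0 \mid \mathbf{y})$, and interpreting the equality $\mathbb{E}[\mathbf{x}_0 \mid \mathbf{x}_t, \mathbf{y}] = \mathbf{x}_0$ in the almost-everywhere sense) are routine and parallel those underlying Theorem \ref{thm:tweedie_iff}.
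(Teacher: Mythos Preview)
Your proposal is correct and follows essentially the same route as the paper: you establish the conditional Tweedie identity $\mathbb{E}[\mathbf{x}_0 \mid \mathbf{x}_t, \mathbf{y}] = \tfrac{1}{\sqrt{\alpha_t}}\bigl(\mathbf{x}_t + \sigma_t^2 \nabla_{\mathbf{x}_t}\log p(\mathbf{x}_t \mid \mathbf{y})\bigr)$ via the conditional independence $\mathbf{x}_t \perp \mathbf{y} \mid \mathbf{x}_0$, and then reuse the sufficient and necessary arguments from Theorem~\ref{thm:tweedie_iff} verbatim with $p(\mathbf{x}_t \mid \mathbf{y})$ in place of $p_t(\mathbf{x}_t)$. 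The paper proceeds identically (its Conditional Tweedie lemma plus a remark that Lemmas~\ref{thm:tweedie} and~\ref{thm:tweedie_sufficiency} carry over unchanged), though your write-up is more explicit about the conditional-independence step and the measure-theoretic regularity needed for the $(\Rightarrow)$ integration.
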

This differs from existing approaches in two key ways. First, we incorporate conditional information from $\mathbf{y}$ directly into the denoising process. Second, rather than $p_t(\mathbf{x}_t)$, we consider $p_t(\mathbf{x}_t | \mathbf{x}_0)$, which is an isotropic Gaussian distribution by construction. Therefore, given $\nabla \log p_t(\mathbf{x}_t | \mathbf{x}_0)$, Theorem \ref{thm:tweedie_cond_iff} tells us that the application of Tweedie's formula in Eq. \ref{eq:our_likelihood} will exactly recover $\mathbf{x}_0$. 

\paragraph{Sufficiency} Even with the optimization framework in Eq. \ref{eq:data_conditional_correction}, $\nabla \log p_t(\mathbf{x}_t | \mathbf{x}_0)$ can only be computed up to the information present in $\mathbf{y}$. However, we show that this is provably the best estimate in the following sense: $\epsilon_\textbf{y}^*$ (and therefore $s_{\text{corrected}}$) is a sufficient statistic for the ground truth $\mathbf{x}_0$ given measurement $\mathbf{y}$ under regularity conditions on $\mathcal{A}$ and $\boldsymbol\eta$:
\begin{theorem}[${\epsilon_\mathbf{y}}_*$ is a sufficient statistic]
\label{thm:sufficiency}
Let $\mathbf{y} = \mathcal{A}(\mathbf{x}_0) + \boldsymbol{\eta}$ be an observation from the forward measurement model, such that $\boldsymbol{\eta} = 0$, or $\mathcal{A}$ is linear. If ${\epsilon_\mathbf{y}}_*$ is defined as in Eq. \ref{eq:dcs_mle}, then $p(\mathbf{y} |  {\epsilon_\mathbf{y}}_*, \mathbf{x}_0 ) = p(\mathbf{y} | {\epsilon_\mathbf{y}}_*)$.
\end{theorem}
We extend this result to a broader class of operators in Theorem \ref{thm:general_sufficiency}. In this sense, $\textbf{DCS}$ effectively closes an information "leak" by ensuring that the only information about $\mathbf{x}_0$ lost in the sampling process is solely that which is irrevocably destroyed by the operator $\mathcal{A}$.

\subsection{Computational Efficiency}
\label{sec:efficiency}
Empirically, \textbf{DCS} enjoys two main computational advantages.
First, it does not need to compute expensive gradients of the score function. Second, it boasts stable performance across choices of $T$ due to the linearity of the forward process diffusion process.

\begin{figure}
    \centering
    \includegraphics[width=.9\linewidth]{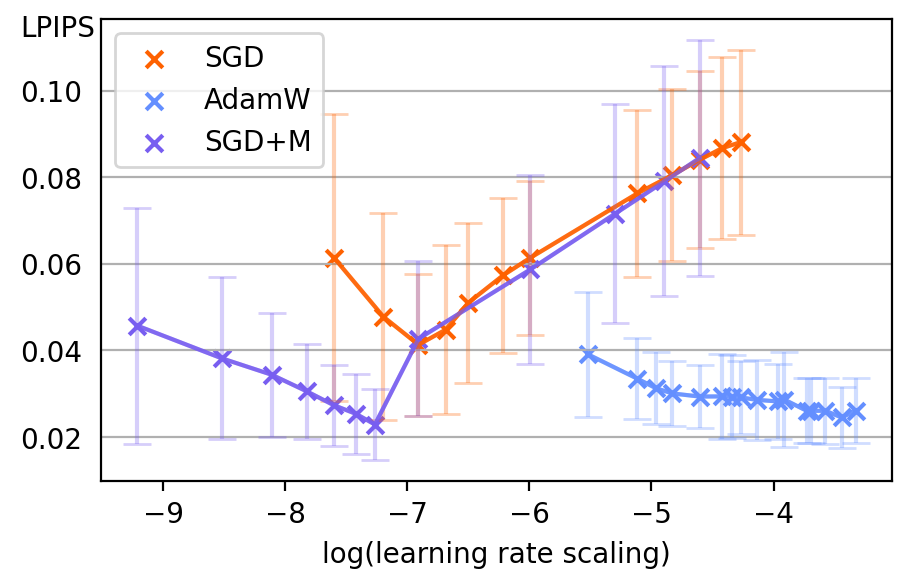}
    \caption{\textbf{DCS} performance across several choices of optimizers. LPIPS score of the predicted $\mathbf{x}_0$ images is plotted against the natural log of learning rate scaling factor for each optimizer.}
    \label{fig:optimizer_ablation}
\end{figure}

\paragraph{No Expensive $\nabla s_\theta(\mathbf{x}_t, t)$ Evaluations} A drawback of many existing algorithms is the need to compute gradients of the score network during sampling (Table \ref{table:inverse_solvers_overview}). This is the most expensive computation in the diffusion step, increasing the runtime of the algorithm by $\mathbf{2}$-$\mathbf{3}\times$. However, this is unavoidable in these solvers without sacrificing performance (Section \ref{sec:jf_main}).

\paragraph{A Near-Linear Reverse Process} As \textbf{DCS} models $\mathbb{E}(\mathbf{x}_0 | \mathbf{x}_t, \mathbf{y})$, it is able to sample approximately from the reverse diffusion process conditioned on the measurement, which reverses the forward process defined in Eq. \ref{eq:xt_marginal}. If the forward process score was able to be exactly estimated, Tweedie's recovers $\mathbf{x}_0$, and the diffusion process can be solved in a single step. In reality, our approximation of this process is correct up to the information about $\mathbf{x}_0$ present in $\mathbf{y}$ (Theorem \ref{thm:general_sufficiency}), under the assumptions detailed in the previous section.

In Figure \ref{fig:robustness}, we validate the robustness of our algorithm to the total diffusion steps ($T$) with the super-resolution task on a subset of the FFHQ $256\times 256$ dataset. We compare against DPS \cite{chung2023diffusion}, DPS-JF (a neural backpropagation-free variant of DPS), and DDNM \cite{wang2022zero} with additive Gaussian noise at standard deviation $\sigma_{\mathbf{y}}=0.05$.

\begin{figure}
    \centering
    \includegraphics[width=0.9\linewidth]{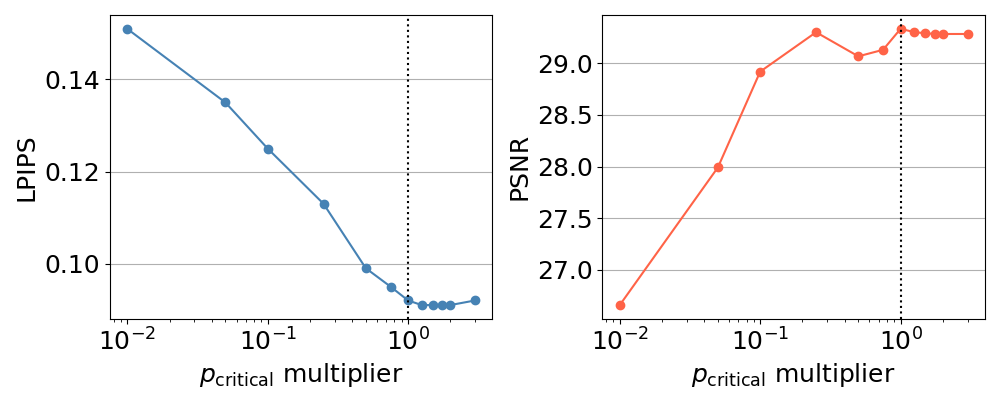}
    \caption{Testing the sensitivity of the $p_\text{critical}$ parameter. \textbf{DCS} is evaluated on the FFHQ dataset for the SRx4 task, varying a multiplier displayed along the x-axis. Our choice of $p_\text{critical}$, here represented by a multiplier of $1$, is signified by a dotted line.}
    \label{fig:pcrit_ablation}
\end{figure}

\begin{figure}
    \centering
    \includegraphics[width=.9\linewidth]{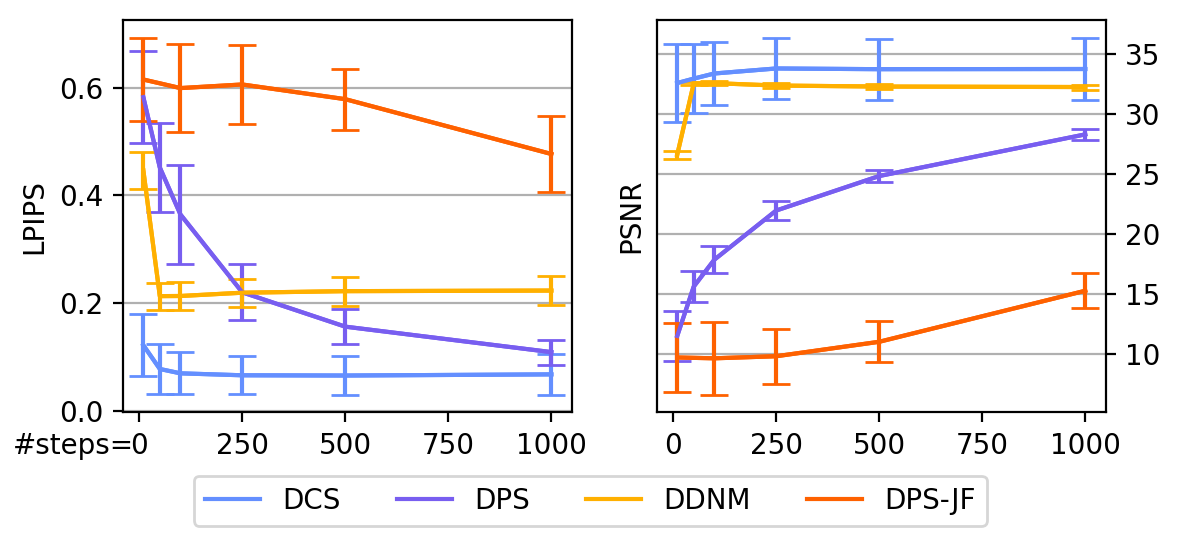}
    \caption{A study on the effect of $T$ on solver performance. \textbf{DCS} method remains nearly invariant to $T$ due to the near-linearity of the forward process diffusion process.}
    \label{fig:robustness}
\end{figure}

\begin{table*}[t!]
    \setlength{\tabcolsep}{1mm}
    \fontsize{8pt}{6pt}\selectfont
    \centering
    \begin{adjustbox}{max width=\linewidth}
        \begin{tabular}{@{}lccccccccccccccc@{}}
            \toprule
            \textbf{FFHQ} & \multicolumn{3}{c}{SR $\times 4$} & \multicolumn{3}{c}{Random Inpainting} & \multicolumn{3}{c}{Box Inpainting} & \multicolumn{3}{c}{Gaussian Deblurring} & \multicolumn{3}{c}{Motion Deblurring}\\
            \cmidrule(r){2-4}\cmidrule(lr){5-7}\cmidrule(l){8-10}\cmidrule(l){11-13}\cmidrule(l){14-16}
            $\sigma_\mathbf{y}=0.01$ & LPIPS $\downarrow$ & PSNR $\uparrow$ & FID $\downarrow$ & LPIPS $\downarrow$ & PSNR $\uparrow$ & FID $\downarrow$ & LPIPS $\downarrow$ & PSNR $\uparrow$ & FID $\downarrow$ & LPIPS $\downarrow$ & PSNR $\uparrow$ & FID $\downarrow$ & LPIPS $\downarrow$ & PSNR $\uparrow$ & FID $\downarrow$ \\
            \midrule
            \textbf{DCS} (Ours) & \textbf{0.137} & \textbf{30.14} & \textbf{19.45} & \textbf{0.024} & \textbf{34.84} & \textbf{21.19} & \textbf{0.088} & \underline{25.11} & \textbf{19.25} & \textbf{0.103} & \underline{28.69} & \textbf{22.62} & \textbf{0.087} & \textbf{29.48} &  \textbf{26.67} \\
            \midrule
            MCG         & \underline{0.144} & 24.84 & \underline{31.47} & 0.073 & 30.59 & \underline{22.22} & 0.453 & 15.44 & 185.54 & 0.209 & 23.51 & 67.88 & 0.217 & 22.93 & 292.1 \\
            DDNM        & 0.208             & \underline{26.28} & 51.33 & \underline{0.040} & \underline{33.08} & 23.35  &     0.209 &     18.12 & 88.32 & 0.235 & 26.09 & 71.47 & 0.424 & 14.22 & 250.9 \\
            LGD-MC & 0.238 &     23.45 &     39.55 &      0.272 &     23.46 &     57.70 &      0.372 &     15.45 &     86.00 &      0.405 &     18.78 &     64.31 &      0.520 &     13.90 &    106.0\\
            DPS         & 0.163             & 25.91 & 33.21 & 0.105 & 29.54 & 29.72 & 0.113 & \underline{23.52} & 24.41 & \underline{0.129} & 26.48 & \underline{26.85} & \underline{0.159} & 24.41 & \underline{29.84} \\
            RED-Diff    &  0.178            &     29.81 &     45.68 &      0.035 &     33.72 &     25.03 &      \underline{0.090} &     \textbf{25.20} &      \underline{19.98} &      0.234 &     \textbf{29.72} &     52.09 &      0.191 &     \underline{29.14} &    116.9 \\
            \toprule
            \textbf{FFHQ} & \multicolumn{3}{c}{SR $\times 4$} & \multicolumn{3}{c}{Random Inpainting} & \multicolumn{3}{c}{Box Inpainting} & \multicolumn{3}{c}{Gaussian Deblurring} & \multicolumn{3}{c}{Motion Deblurring}\\
            \cmidrule(r){2-4}\cmidrule(lr){5-7}\cmidrule(l){8-10}\cmidrule(l){11-13}\cmidrule(l){14-16}
            $\sigma_\mathbf{y}=0.1$ & LPIPS $\downarrow$ & PSNR $\uparrow$ & FID $\downarrow$ & LPIPS $\downarrow$ & PSNR $\uparrow$ & FID $\downarrow$ & LPIPS $\downarrow$ & PSNR $\uparrow$ & FID $\downarrow$ & LPIPS $\downarrow$ & PSNR $\uparrow$ & FID $\downarrow$ & LPIPS $\downarrow$ & PSNR $\uparrow$ & FID $\downarrow$ \\
            \midrule
            \textbf{DCS} (Ours)                 & \textbf{0.175} & \textbf{24.88} & \textbf{30.11} & \textbf{0.149} & \underline{27.54} & \textbf{32.80} & \underline{0.163} & \textbf{23.22} & \textbf{26.44} & \textbf{0.176} & \underline{25.96} & \textbf{26.08} & \underline{0.224} & \textbf{24.61} & \textbf{31.40} \\
            \midrule
            MCG                  & 0.546 & 20.44 & 102.6 & 0.227 & 26.00 & 50.40 & 0.579 & 15.30 & 207.2 & 0.429 & 25.80 & 69.29 & 0.973 & -7.104 & 295.3 \\
            DDNM                 &      0.623 &     21.49 &    145.9 &      0.179 &     24.96 &     39.18 &      0.334 &     19.20 &     72.11 &      1.220 &     10.73 &    176.8 &      0.739 &      5.099 &    524.0 \\
            LGD-MC & 0.256 &     22.31 &     39.58 &      0.288 &     22.22 &     56.05 &      0.384 &     15.38 &     87.72 &      0.415 &     18.30 &     66.04 &      0.524 &     13.65 &    105.4\\
            DPS                  &      \underline{0.185} & \underline{24.79} & \underline{35.46} & 0.157 & \underline{26.72} & 35.24 & \textbf{0.158} & 22.58 & \underline{32.47} & \underline{0.180} & 24.720 & 33.53 & \textbf{0.212} & \underline{22.41} & \underline{35.09} \\
            RED-Diff     & 0.665 &     22.10 &    155.1 &      \underline{0.155} &     \textbf{28.62} &     \underline{34.78} &      0.298 &     \underline{22.96} &     61.14 &      0.447 &     \textbf{26.93} &    106.3 &      0.423 &     24.16 &    120.1\\
            \bottomrule
            \toprule
            \textbf{ImageNet}& \multicolumn{3}{c}{SR $\times 4$} & \multicolumn{3}{c}{Random Inpainting} & \multicolumn{3}{c}{Box Inpainting} & \multicolumn{3}{c}{Gaussian Deblurring} & \multicolumn{3}{c}{Motion Deblurring}\\
            \cmidrule(r){2-4}\cmidrule(lr){5-7}\cmidrule(l){8-10}\cmidrule(l){11-13}\cmidrule(l){14-16}
            $\sigma_\mathbf{y}=0.01$ & LPIPS $\downarrow$ & PSNR $\uparrow$ & FID $\downarrow$ & LPIPS $\downarrow$ & PSNR $\uparrow$ & FID $\downarrow$ & LPIPS $\downarrow$ & PSNR $\uparrow$ & FID $\downarrow$ & LPIPS $\downarrow$ & PSNR $\uparrow$ & FID $\downarrow$ & LPIPS $\downarrow$ & PSNR $\uparrow$ & FID $\downarrow$ \\
            \midrule
            \textbf{DCS} (Ours) & \textbf{0.238} & 23.45 & \textbf{39.41} & 0.142 & 26.06 & 34.46 & \textbf{0.230} & \textbf{20.63} & \underline{37.11} & \textbf{0.253} & 24.22 & \textbf{38.96} & \textbf{0.203} & \textbf{24.62} & \textbf{38.63}\\
            \midrule
            MCG & 0.638 & 15.62 & 89.39 & 0.198 & 24.34 & 35.19 & 0.273 & 16.68 & 80.35 & 0.645 & 21.18 & 124.6 &  0.980 & -5.726 &  231.1 \\
            DDNM & 0.333 & \textbf{25.16} & 51.33 & \textbf{0.084} & \textbf{28.35} & \underline{20.27} & 0.258 & 17.42 & 85.41 & 0.456 & \underline{24.35} & 67.98 & 0.694 & 5.721 &  304.2\\
            LGD-MC    & 0.662 &     14.460 &    113.5 &      0.650 &     14.57 &    129.8 &      0.696 &     11.63 &    133.9 &      0.796 &     10.46 &    165.7 &      0.807 &      9.609 &    184.7\\
            DPS & \underline{0.309} & 23.99 & \underline{49.81} & 0.266 & 25.05 & 38.87 & 0.301 & 18.76 & \textbf{34.85} & 0.493 & 19.14 & \underline{61.59} &  0.460 & 18.65 & \underline{53.21}\\
            RED-Diff & 0.386 &     \underline{25.07} &     57.06 &      \underline{0.090} &     \underline{28.17} &     \textbf{16.71} &      \underline{0.239} &     \underline{19.99} &     54.38 &      0.459 &     \textbf{24.70} &     68.71 &      0.376 &     \underline{23.66} &     55.77 \\
            \bottomrule
            \toprule
            \textbf{ImageNet} & \multicolumn{3}{c}{SR $\times 4$} & \multicolumn{3}{c}{Random Inpainting} & \multicolumn{3}{c}{Box Inpainting} & \multicolumn{3}{c}{Gaussian Deblurring} & \multicolumn{3}{c}{Motion Deblurring}\\
            \cmidrule(r){2-4}\cmidrule(lr){5-7}\cmidrule(l){8-10}\cmidrule(l){11-13}\cmidrule(l){14-16}
            $\sigma_\mathbf{y}=0.1$ & LPIPS $\downarrow$ & PSNR $\uparrow$ & FID $\downarrow$ & LPIPS $\downarrow$ & PSNR $\uparrow$ & FID $\downarrow$ & LPIPS $\downarrow$ & PSNR $\uparrow$ & FID $\downarrow$ & LPIPS $\downarrow$ & PSNR $\uparrow$ & FID $\downarrow$ & LPIPS $\downarrow$ & PSNR $\uparrow$ & FID $\downarrow$ \\
            \midrule
            \textbf{DCS} (Ours) & \textbf{0.402} & \textbf{22.99} & \textbf{48.21} & \textbf{0.166} & \textbf{26.04} & \underline{34.47} & \textbf{0.243} & \textbf{19.70} & \textbf{46.03} & \textbf{0.407} & \underline{22.28} & \textbf{51.13} & \textbf{0.435} & \underline{20.43} & \textbf{61.48} \\
            \midrule
            MCG                  & 0.886 & 14.01 & 145.1 & 0.459 & 19.92 & 78.86 & 0.433 & 15.63 & 124.0 & 0.650 & \underline{22.00} & 117.4 & 0.984 & -6.868 & 231.3 \\
            DDNM                 & 0.751 & \underline{20.98} & 133.3 & 0.1693 & \underline{25.63} & 35.72 & 0.400 & 18.06 & 110.8 & 1.221 & 9.602 & 202.7 & 0.783 & 5.009 & 350.1 \\
            LGD-MC &   0.671 &     14.01 &    116.51 &      0.661 &     14.19 &    131.2 &      0.701 &     11.53 &    134.58 &      0.804 &     10.44 &    167.3 &      0.806 &      9.587 &    185.4 \\
            DPS                  & \underline{0.540} & 18.63 & \underline{85.06} & 0.506 & 20.10 & 82.74 & 0.479 & 18.03 & \underline{83.06} & \underline{0.412} & 20.57 & \underline{65.07} & \underline{0.450} & 18.91 & \underline{75.65} \\ 
            RED-Diff             & 0.747 &     20.66 &    136.35 &     \underline{0.167} &     25.38 &     \textbf{32.99} &      \underline{0.374} &     \underline{19.68} &     88.20 &      0.660 &     \textbf{23.19} &    110.9 &      0.591 &     \textbf{21.27} &    138.8 \\
            \bottomrule
        \end{tabular}
    \end{adjustbox}
    \caption{Quantitative comparison on FFHQ 256x256-1K and ImageNet-1K datasets across various inverse problem tasks and noise levels ($\sigma_{\mathbf{y}} \in \{ 0.01, 0.1 \}$).}
    \label{table:main_quantitative}
\end{table*}

\section{EXPERIMENTS}
\label{sec:experiments}
We examine the empirical performance of \textbf{DCS} across a variety of natural image based inverse problems, against a range of state-of-the-art methods. Quantitatively, we leverage three key metrics to evaluate the quality of signal recovery: Learned Perceptual Image Patch Similarity (LPIPS), peak signal-to-noise ratio (PSNR), and Frechet Inception Distance (FID).

We run \textbf{DCS} and the other methods listed in Table \ref{table:inverse_solvers_overview} on the FFHQ-256 \citep{karras2019style}, \citep{kazemi2014one} and ImageNet \citep{deng2009imagenet} datasets. For the prior network $\epsilon_\theta$, we use the corresponding pretrained model weights from \citep{chung2022diffusion}.

We examine five operator inversion tasks: Super-Resolution, Gaussian Deblurring, Motion Deblurring, Random Inpainting, and Box Inpainting. We first run experiments with additive Gaussian noise of standard deviation $\sigma_{\mathbf{y}} \in \{ 0.01, 0.1 \}$. We also present quantitative results on FFHQ and ImageNet in Table \ref{table:main_quantitative}, and a qualitative comparison in Figures \ref{fig:qualitative}, \ref{fig:noisy_comparison}. We delegate further experiments, such as evaluations on subsets of FFHQ used in other works, additional qualitative comparisons, and details of the implementation to Appendix \ref{sec:additional_experiments}, \ref{sec:implementation_details} and \ref{sec:qualitative}.

We find that \textbf{DCS} either outperforms, or is comparable to all existing methods. While some methods excel at certain metrics in certain tasks and fail to recover the signal at other times, \textbf{DCS} consistently performs well across experiments. For example, \textbf{DCS} is one of few methods that has reasonable results on Motion Deblurring at high noise levels. DDNM and RED-Diff, on the other hand, are powerful across inpainting tasks in general, but fail to perform Motion Deblurring and show underwhelming qualitative performance on many tasks, especially in the high noise regime (Figures \ref{fig:noisy_comparison}, \ref{fig:qualitative}). We find that projection methods in general overfit in the high noise regime. Other methods such as LGD-MC degrade less severely in the presence of noise, however do not perform as well in lower noise levels indicating underfitting. It is apparent from both quantitative and qualitative comparisons that \textbf{DCS} and DPS strike a clearer balance between over- and underfitting.
We also notice that \textbf{DCS} provides a significant speedup and reduction in memory footprint compared to most methods, as noted in Table \ref{table:inverse_solvers_overview}. We achieve this by not requiring backpropagation of the score network, as well as limiting the required number of neural function evaluations by using the measurement-consistent version of Tweedie's formula.


\subsection{Ablation on Measurement Noise Type}

We consider \textbf{DCS}'s robustness to non-Gaussian noise types. To solve inverse problems with different noise types, we modify both the measurement model (Eq. \ref{eq:likelihood_fn}), and NAM threshold.

We first extend \textbf{DCS} (and NAM) to operate under a Poisson noise model. Following \citep{chung2023diffusion}, we change the Gaussian forward measurement model to a Gaussian approximation of the Poisson likelihood (see \citep{chung2023diffusion}, Eqs. (17)-(19)), and use the Poisson CDF in the NAM threshold instead of the Gaussian CDF Eq. \ref{eq:nam_test}:
$$F(\mathbf{y}_j | \hat{\mathbf{x}}_0) = \sum_{k=0}^{\lfloor \mathbf{y}_j \rfloor }  \frac{[\mathcal{A}( \hat{\mathbf{x}}_0)]_j^k \exp \left( -[\mathcal{A}( \hat{\mathbf{x}}_0)]_j \right)}{k!}.$$

We next consider a Laplace noise model. We make analogous modifications, switching both the forward measurement model, and the CDF in NAM to utilize the Laplace CDF:
$$F(\mathbf{y}_j | \hat{\mathbf{x}}_0) = \frac{1}{2} + \frac{1}{2} \text{sgn}(y_j - \mathcal{A}(\hat{\mathbf{x}}_0))  e^{-|y_j - \mathcal{A}(\hat{\mathbf{x}}_0)|/b}.$$

We evaluate our method using the FFHQ 256x256 dataset, with Poisson rate parameter $\lambda=1$, and Laplace parameter $b = 0.05$ on all five tasks considered in this work, and display results in Table \ref{table:ffhq_alt_noise}. Under Poisson noise, \textbf{DCS} consistently outperforms or achieves parity with DPS across metrics. For Laplace noise, \textbf{DCS} achieves parity with DPS across metrics. We also observe that \textbf{DCS} retains the computation advantage noted in Table \ref{table:inverse_solvers_overview} when running with different types of measurement noise.

\begin{table*}[t!]
    \setlength{\tabcolsep}{1mm}
    \fontsize{8pt}{6pt}\selectfont
    \centering
    \begin{adjustbox}{max width=\linewidth}
        \begin{tabular}{@{}llcccccccccc@{}}
            \toprule
            \textbf{FFHQ} & & \multicolumn{2}{c}{SR $\times 4$} & \multicolumn{2}{c}{Random Inpainting} & \multicolumn{2}{c}{Box Inpainting} & \multicolumn{2}{c}{Gaussian Deblurring} & \multicolumn{2}{c}{Motion Deblurring}\\
            \cmidrule(r){3-4}\cmidrule(lr){5-6}\cmidrule(l){7-8}\cmidrule(l){9-10}\cmidrule(l){11-12}
             Noise Type & Algorithm & LPIPS $\downarrow$ & PSNR $\uparrow$ & LPIPS $\downarrow$ & PSNR $\uparrow$ & LPIPS $\downarrow$ & PSNR $\uparrow$ & LPIPS $\downarrow$ & PSNR $\uparrow$ & LPIPS $\downarrow$ & PSNR $\uparrow$ \\
            \midrule
            \textit{Poisson} & \textbf{DCS} (Ours) & \textbf{0.120} & \textbf{28.301} & \textbf{0.089} & \textbf{29.344} & \textbf{0.131} & 23.072 & \textbf{0.110} & \textbf{28.028} & \textbf{0.130} & \textbf{26.437}\\
            & DPS & 0.166 & 24.143 & 0.141 & 26.062 & 0.133 & \textbf{24.480} & 0.126 & 26.600 & 0.155 & 24.034\\
            \midrule
            \textit{Laplace} & \textbf{DCS} (Ours) & 0.099 & 28.348 & 0.126 & 26.129 & \textbf{0.123} & 24.395 & \textbf{0.105} & 28.990 & \textbf{0.114} & \textbf{28.026}\\
            & DPS & \textbf{0.089} & \textbf{29.608} & \textbf{0.120} & \textbf{28.362} & 0.125 & \textbf{24.576} & 0.112 & \textbf{29.469} & 0.116 & 27.682\\
        \end{tabular}
    \end{adjustbox}
    \caption{Quantitative comparison on FFHQ 256x256-1K across various inverse problem tasks for Poisson noise with $\lambda = 1$ and Laplace noise with $b=0.05$.}
    \label{table:ffhq_alt_noise}
\end{table*}

\subsection{Ablation on the Noise-aware Maximization Optimizer}

We investigate how the choice of optimizer and parameters affects the noise-aware maximization algorithm in \textbf{DCS}. We note that the flexibility of using an optimizer enables us to make use of the stopping criterion as detailed in Section \ref{sec:ours}. In Figure \ref{fig:optimizer_ablation} we run \textbf{DCS} with AdamW \cite{loshchilov2017fixing}, SGD with momentum, and vanilla SGD to solve the SRx4 task on a subset of FFHQ. Runs of each optimizer at learning rate scaling factors are displayed to show the best performance, ensuring a fair comparison. It is clear in Figure \ref{fig:optimizer_ablation} that the addition of a momentum term to the optimization process (both present in AdamW and SGD with momentum) can attain a higher level of image fidelity and solver stability than vanilla SGD. This provides empirical evidence for optimizer bias having an effect on solver performance in \textbf{DCS}. We see from this experiment that AdamW produces the most consistent results across learning rates, which motivates its use in our implementation.

\subsection{Sensitivity of NAM stopping threshold}

The choice of $p_{\text{critical}}$ is statistically inspired, so we confirm it's robustness with sensitivity testing. We run \textbf{DCS} with NAM to solve the SRx4 task on the FFHQ 256x256 dataset, varying $p_\text{critical}$ by a multiplicative factor and measure the LPIPS and PSNR scores. Results are displayed in Figure \ref{fig:pcrit_ablation}, and show that performance remains stable and consistent as long as $p_\text{critical}$ is large enough, and performs best around our chosen value (where the multiplier is $1$). This finding supports the conclusion that our optimization scheme with a statistically informed early stopping criterion helps prevent overfitting, as at lower values of $p_\text{critical}$, the performance starts to degrade. Note that this experiment does not detect underfitting as an underfit solution is still expected to produce a high fidelity image as it relies more on the unconditional diffusion model.

\subsection{Improvement on Jacobian-Free Implementations} 
\label{sec:jf_main}
A major gain in the empirical performance of \textbf{DCS} comes from the fact that it no longer requires backpropagations through the neural score function, which allows for reductions in both runtime and memory footprint. In theory, most diffusion-based solvers can be modified to remove this neural backpropagation step by applying a $\texttt{stop\_gradient}$ to the score function output (e.g., RED-Diff \citep{mardani2023variational}). We compare against RED-Diff in the main text, and additionally against ablated variants of DPS \citep{chung2022diffusion} and LGD \citep{song2023loss} in Appendix \ref{sec:jf}, and demonstrate clear improvements on these methods.

\section{IMPACT AND LIMITATIONS}

Our work proposes a substantial improvement to the framework of noise-robust guided generation via diffusion models. As shown in Section \ref{sec:experiments} and Appendix \ref{sec:additional_experiments}, \textbf{DCS} achieves consistent gains in performance and robustness, while also reducing runtime and memory requirements. These advances, along with the availability of our code, make the method particularly useful for researchers and practitioners with limited computational budgets, positioning \textbf{DCS} as a practical and readily adoptable tool for users working with inverse problems.

Our experiments focus on inverse problems with linear corrupting operators $\mathcal{A}$. While our theoretical framework extends to a broader class of non-linear operators (Appendix \ref{sec:proofs_suff}), further investigation is required to fully understand the behavior of \textbf{DCS} and competing methods in non-linear settings.

\section{CONCLUSION}
We proposed an effective adjustment to the diffusion-based inverse problem solver framework in the literature that improves speed and stability. Observing that the marginals of the diffusion process, which solve the inverse problem, are normally distributed at each time $t$, we derived a simple, single-parameter likelihood model, whose sole unknown is obtained via a tractable maximum likelihood estimation algorithm. We leveraged this new perspective to create a noise-aware maximization scheme, and demonstrated the effectiveness of our method with a suite of numerical experiments and qualitative comparisons.

\acknowledgments
This work was supported by NIH Grants U54AG076043, U54AG079759, R01DA063148, UM1DA051410, U01DA053628, and P50CA121974.

\bibliography{main}





\section*{Checklist}

\begin{enumerate}

  \item For all models and algorithms presented, check if you include:
  \begin{enumerate}
    \item A clear description of the mathematical setting, assumptions, algorithm, and/or model. [Yes]
    \item An analysis of the properties and complexity (time, space, sample size) of any algorithm. [Yes]
    \item (Optional) Anonymized source code, with specification of all dependencies, including external libraries. [Yes, see link in footnote]
  \end{enumerate}

  \item For any theoretical claim, check if you include:
  \begin{enumerate}
    \item Statements of the full set of assumptions of all theoretical results. [Yes]
    \item Complete proofs of all theoretical results. [Yes, proofs are in the supplementary document.]
    \item Clear explanations of any assumptions. [Yes]     
  \end{enumerate}

  \item For all figures and tables that present empirical results, check if you include:
  \begin{enumerate}
    \item The code, data, and instructions needed to reproduce the main experimental results (either in the supplemental material or as a URL). [Yes]
    \item All the training details (e.g., data splits, hyperparameters, how they were chosen). [Yes]
    \item A clear definition of the specific measure or statistics and error bars (e.g., with respect to the random seed after running experiments multiple times). [Yes]
    \item A description of the computing infrastructure used. (e.g., type of GPUs, internal cluster, or cloud provider). [Yes]
  \end{enumerate}

  \item If you are using existing assets (e.g., code, data, models) or curating/releasing new assets, check if you include:
  \begin{enumerate}
    \item Citations of the creator If your work uses existing assets. [Yes]
    \item The license information of the assets, if applicable. [Not Applicable]
    \item New assets either in the supplemental material or as a URL, if applicable. [Not Applicable]
    \item Information about consent from data providers/curators. [Not Applicable]
    \item Discussion of sensible content if applicable, e.g., personally identifiable information or offensive content. [Not Applicable]
  \end{enumerate}

  \item If you used crowdsourcing or conducted research with human subjects, check if you include:
  \begin{enumerate}
    \item The full text of instructions given to participants and screenshots. [Not Applicable]
    \item Descriptions of potential participant risks, with links to Institutional Review Board (IRB) approvals if applicable. [Not Applicable]
    \item The estimated hourly wage paid to participants and the total amount spent on participant compensation. [Not Applicable]
  \end{enumerate}

\end{enumerate}

\clearpage
\appendix
\thispagestyle{empty}

\onecolumn
\aistatstitle{Injecting Measurement Information Yields a Fast and Noise-Robust Diffusion-Based Inverse Problem Solver: \\
Supplementary Materials}

\section{BACKGROUND AND RELATED WORK}
\label{sec:background}
\subsection{Diffusion Models}
\label{sec:dms}
Inspired by non-equilibrium thermodynamics, denoising diffusion probabilistic models \citep{ho2020denoising} convert data $\mathbf{x}_0 \sim p_{\text{data}}(\mathbf{x})$ to noise $\mathbf{x}_T \sim \mathcal{N}(\mathbf{0}, \mathbf{I})$ via a diffusion process described by the variance-preserving stochastic differential equation (VP-SDE)
\begin{equation}
    d\mathbf{x} = -\frac{\beta(t)}{2} \mathbf{x} dt + \sqrt{\beta(t)} d{\mathbf{w}},
    \label{eq:ito_sde}
\end{equation}
where $\beta(t): \mathbb{R} \rightarrow [0, 1]$ is a monotonically increasing noise schedule and $\mathbf{w}$ is the standard Wiener process \citep{song2020score}. This leads to the marginal distribution
\begin{equation}
    p_t(\mathbf{x}_t) = \mathbb{E}_{\mathbf{x}_0 \sim p_{\text{data}}}  \big [ \mathcal{N}(\mathbf{x}_t; \sqrt{\alpha_t} \mathbf{x}_0, \; \underbrace{(1 - \alpha_t)}_{\sigma_t^2} \mathbf{I})  \big ], \hspace{.1in} \alpha_t = e^{-\frac{1}{2}\int_0^t \beta(s) ds},
    \label{eq:data_dependent_diffusion}
\end{equation}
where $\mathcal{N}(\ \cdot\ ; \mu, \Sigma)$ is the probability density function (pdf) of a normal distribution centered at $\mu$ with covariance $\Sigma$. Sampling from $p_{\text{data}}(\mathbf{x})$ can then occur by modeling the reverse diffusion, which has a simple form given by \citep{anderson1982reverse}
\begin{equation}
    d\overline{\mathbf{x}} = \left[-\frac{\beta(t)}{2} \mathbf{x} - \beta(t) \nabla_\mathbf{x} \log p_t(\mathbf{x}_t)\right]dt + \sqrt{\beta(t)} d\overline{\mathbf{w}},
    \label{eq:reverse_sde}
\end{equation}
with reverse-time Wiener process $\overline{\mathbf{w}}$ and score function $\nabla_\mathbf{x} \log p_t(\mathbf{x}_t)$. Therefore, diffusion model training consists of approximating the score function with a model
\begin{equation}
    s_\theta(\mathbf{x}_t, t) \approx \nabla_\mathbf{x} \log p_t(\mathbf{x}_t),
\end{equation}
and sampling consists of obtaining solutions to the reverse-time SDE (\ref{eq:reverse_sde}) with numerical solvers. A simple approach is given by the DDIM sampler with $\sigma_t = \sqrt{1 - \alpha_t}$ \citep{song2020denoising}
\begin{equation}
    \mathbf{x}_{t - 1} = \sqrt{\alpha_{t-1}} \frac{\mathbf{x}_t + \sigma^2_t \nabla \log p_t(\mathbf{x}_t)}{\sqrt{\alpha_t}} + \sigma_{t-1}\boldsymbol\epsilon,
    \label{eq:ddpm_sampling}
\end{equation}
where $\boldsymbol\epsilon \sim \mathcal{N}(0,\mathbf{I})$.

\subsection{Solving Inverse Problems with Diffusion Models}
\label{sec:tweedies}
When solving inverse problems with diffusion models, the aim is to leverage information from $\mathbf{y}$ to define a \textbf{modified} reverse diffusion process
\begin{equation}
    \mathbf{x}_T, \mathbf{x}_{T-1}, \dots, \mathbf{x}_1, \mathbf{x}_0,
\end{equation}
such that $\mathbf{x}_t$ coincides with the desired $\mathbf{x}$ (Eq. \ref{eq:intro_inverse_problem}) precisely at $t=0$. Previous approaches can generally be sorted into two categories, which we designate \textbf{posterior solvers} and \textbf{projection solvers}.
    
\paragraph{Posterior Solvers}
An intuitive approach is leveraging Bayes' rule to sample from the \textbf{posterior} distribution given a prior $p_t(\mathbf{x}_t)$ and observation $\mathbf{y}$:
\begin{equation}
    \mathbf{x}_t \sim p(\mathbf{x}_t|\mathbf{y}) = \frac{p(\mathbf{y} | \mathbf{x}_t) p(\mathbf{x}_t)}{p(\mathbf{y})}.
\end{equation}
Taking logs and gradients of both sides of the equation, we obtain a form of the conditional density that can be accurately approximated with the modeled score function
\begin{equation}
    \nabla \log p(\mathbf{x}_t|\mathbf{y}) = \nabla \log p(\mathbf{y} | \mathbf{x}_t) + \nabla \log p(\mathbf{x}_t) \approx \nabla \log p(\mathbf{y} | \mathbf{x}_t) + s_\theta(\mathbf{x}_t, t),
    \label{eq:posterior_bayes}
\end{equation}
and describes the core method of the DPS algorithm \citep{chung2022diffusion}. This strategy can also be extended to latent diffusion models, resulting in Latent-DPS and PSLD \citep{rout2023beyond}. Generally, the conditional term $\nabla \log p(\mathbf{y} | \mathbf{x}_t)$ cannot be exact due to reasons we will investigate subsequently in Section \ref{sec:tweedies_explained}, though these approximations are improved in LGD \citep{song2023loss} and STSL \citep{rout2024solving}. More recent work \citep{sun2024provable} propose an annealed Monte-Carlo-based perspective to posterior sampling, which results in a very similar algorithm to DPS. Much like MCG and ReSample (discussed in the next category), posterior solvers require estimating $\frac{\partial}{\partial \mathbf{x}_t} \mathbf{x}_0$ which involves backpropagation through the diffusion model, and significantly increases runtime and hampers scalability compared to unconditional sampling.

\paragraph{Projection Solvers} Another approach involves guiding the reverse diffusion process by directly \textbf{projecting} $\mathbf{x}_t$ onto a manifold $\mathcal{M} = \{\mathbf{x}: \mathcal{A}(\mathbf{x}) = y \} \subseteq \mathbf{R}^d$ at each time step, i.e.
\begin{align}
	\mathbf{x}_{t}' &= \mathbf{P}\hat{\mathbf{x}}_0 [\mathbf{x}_t] \\
    \mathbf{x}_{t - 1} &= \sqrt{\alpha_{t-1}} \frac{\mathbf{x}_t' + \sigma^2_t \nabla \log p(\mathbf{x}_t' | \hat{\mathbf{x}}_0 [\mathbf{x}_t])}{\sqrt{\alpha_t}} + \sigma_{t-1}\boldsymbol\epsilon.
\end{align}

Where $\hat{\mathbf{x}}_0[\mathbf{x}_t]$ is some prediction of $\mathbf{x}_0$ given only $\mathbf{x}_t$ (we elaborate in Section \ref{sec:tweedies_explained}), and $\mathbf{P}$ is either a projection onto the low rank subspace or range of $\mathcal{A}$. The resulting algorithms are DDRM \citep{kawar2022denoising} and DDNM \citep{wang2022zero}, respectively. Of course, this strategy is often restricted to situations where two conditions simultaneously hold true: (1) the measurement operator $\mathcal{A}$ is linear, and (2) the inverse problem is noiseless, i.e, $\eta$ is identically $\mathbf{0}$. These assumptions drastically limit the applicability of such models. The linearity restriction can be lifted by taking derivatives the measurement discrepancy, as in MCG \citep{chung2022improving} and ReSample \citep{song2024solving}, though this comes at the cost of significantly increased computation, requiring $\frac{\partial}{\partial \mathbf{x}_t} \mathbf{x}_0$ which involves backpropagating through the score network. Finally, \citep{cardoso2023monte} straddles the line between both categories --- while \texttt{MCGdiff} is ostensibly a Bayesian solver, it bears greater resemblance to projection solvers since it does not form the decomposition in Eq. \ref{eq:posterior_bayes} and also samples by projecting each iterate to the null-space of $\mathcal{A}$, thus implementing a projected n-particle sequential monte carlo (SMC) sampling algorithm.

\paragraph{A Maximum Likelihood Solver} We take a different perspective on solving the inverse problem. As seen in Section \ref{sec:tweedies_explained}, both \textbf{projection} and \textbf{posterior} solvers must quantify the discrepancy between $\mathbf{x}_t$ and $\mathbf{y}$ via the consistency error $||\mathcal{A}(\mathbf{x}_0) - \mathbf{y}||$ at each diffusion step. Due to the complexity of the diffusion process, this involves approximating a fundamentally intractable quantity. In Section \ref{sec:ours}, we construct a simpler process whose parameters can be obtained via maximum likelihood estimation. Unlike the evidence lower bound proposed in \citep{mardani2023variational}, we derive an explicit likelihood model, which is amenable to an optimization scheme with a probabilistic noise-aware stopping criterion. Finally, we show that the resulting algorithm is simple, fast, and adaptable to noise.

\section{ADDITIONAL THEOREMS AND PROOFS}
\label{sec:proofs}

\subsection{Proof of Tweedie's Formula}
For completeness, we include the statement and proof for Tweedie's formula.

\begin{theorem}[Tweedie's Formula]
Let $\mathbf{x}_0$ be a sample drawn from a distribution $p(\mathbf{x}_0)$. Then for any
\begin{equation}
    \mathbf{x}_t = \alpha_t \mathbf{x}_0 + \sigma_t \mathbf{z} \hspace{.5in} \mathbf{z} \sim \mathcal{N}(0, \mathbf{I})
\end{equation}
drawn from the marginal of the diffusion process on $p(\mathbf{x}_0)$ at time $t$, the \textbf{posterior mean} given $\mathbf{x}_t$ is
\begin{equation}
    \mathbb{E}[\mathbf{x}_0 | \mathbf{x}_t] = \frac{1}{\alpha_t} \left[\mathbf{x}_t + \sigma_t^2 \nabla \log p_t(\mathbf{x}_t)\right].
\end{equation}
\label{thm:tweedies}
\end{theorem}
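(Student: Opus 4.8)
The plan is to obtain the formula by differentiating the log-marginal $\log p_t(\mathbf{x}_t)$ with respect to $\mathbf{x}_t$ and recognizing the posterior mean $\mathbb{E}[\mathbf{x}_0 \mid \mathbf{x}_t]$ that emerges. First I would write the marginal explicitly as the Gaussian mixture $p_t(\mathbf{x}_t) = \int p(\mathbf{x}_t \mid \mathbf{x}_0)\, p(\mathbf{x}_0)\, d\mathbf{x}_0$, where the transition kernel $p(\mathbf{x}_t \mid \mathbf{x}_0) = (2\pi\sigma_t^2)^{-d/2}\exp\!\big(-\|\mathbf{x}_t - \alpha_t\mathbf{x}_0\|^2/(2\sigma_t^2)\big)$ is the isotropic Gaussian dictated by the marginal $\mathbf{x}_t = \alpha_t \mathbf{x}_0 + \sigma_t \mathbf{z}$.

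The central computation is the elementary identity $\nabla_{\mathbf{x}_t} p(\mathbf{x}_t \mid \mathbf{x}_0) = -\sigma_t^{-2}(\mathbf{x}_t - \alpha_t\mathbf{x}_0)\, p(\mathbf{x}_t \mid \mathbf{x}_0)$, i.e. differentiating a Gaussian brings down its affine log-gradient. Substituting this after passing $\nabla_{\mathbf{x}_t}$ under the integral sign gives
\begin{equation*}
\nabla_{\mathbf{x}_t} p_t(\mathbf{x}_t) = -\frac{1}{\sigma_t^2}\int (\mathbf{x}_t - \alpha_t\mathbf{x}_0)\, p(\mathbf{x}_t \mid \mathbf{x}_0)\, p(\mathbf{x}_0)\, d\mathbf{x}_0.
\end{equation*}
Dividing by $p_t(\mathbf{x}_t)$ turns the left-hand side into $\nabla_{\mathbf{x}_t}\log p_t(\mathbf{x}_t)$ and, via Bayes' rule $p(\mathbf{x}_0 \mid \mathbf{x}_t) = p(\mathbf{x}_t \mid \mathbf{x}_0) p(\mathbf{x}_0)/p_t(\mathbf{x}_t)$, rewrites the right-hand integral as an expectation under the posterior, yielding $\nabla_{\mathbf{x}_t}\log p_t(\mathbf{x}_t) = -\sigma_t^{-2}\big(\mathbf{x}_t - \alpha_t\,\mathbb{E}[\mathbf{x}_0 \mid \mathbf{x}_t]\big)$. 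Solving this linear relation for $\mathbb{E}[\mathbf{x}_0 \mid \mathbf{x}_t]$ produces exactly $\frac{1}{\alpha_t}\big(\mathbf{x}_t + \sigma_t^2 \nabla_{\mathbf{x}_t}\log p_t(\mathbf{x}_t)\big)$, which is the claimed expression.

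The one step requiring care is the interchange of differentiation and integration used to bring $\nabla_{\mathbf{x}_t}$ inside the marginal. I would justify it with a dominated-convergence argument: for $\mathbf{x}_t$ ranging over a compact neighborhood, the integrand's gradient is dominated by an integrable envelope, since $\|\mathbf{x}_t - \alpha_t\mathbf{x}_0\|\,\exp(-\|\mathbf{x}_t - \alpha_t\mathbf{x}_0\|^2/(2\sigma_t^2))$ is uniformly bounded and $p(\mathbf{x}_0)$ integrates to one, under the mild regularity that $p(\mathbf{x}_0)$ has a finite first moment so that $\mathbb{E}[\mathbf{x}_0 \mid \mathbf{x}_t]$ is well defined. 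Everything downstream of this interchange is purely algebraic, so I anticipate no further difficulty.
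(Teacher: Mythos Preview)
Your proposal is correct and follows essentially the same argument as the paper: both write $p_t(\mathbf{x}_t)$ as a Gaussian mixture over $\mathbf{x}_0$, use the identity $\nabla_{\mathbf{x}_t}\phi_{\sigma_t}(\mathbf{x}_t-\alpha_t\mathbf{x}_0)=-\sigma_t^{-2}(\mathbf{x}_t-\alpha_t\mathbf{x}_0)\phi_{\sigma_t}(\cdot)$, and then invoke Bayes' rule to identify the posterior expectation inside the integral. The only cosmetic difference is direction---the paper starts from $(\mathbb{E}[\alpha_t\mathbf{x}_0\mid\mathbf{x}_t]-\mathbf{x}_t)/\sigma_t^2$ and shows it equals the score, whereas you start from the score and back out the posterior mean---and you additionally justify the differentiation-under-the-integral step, which the paper leaves implicit.
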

\begin{proof}[Proof (of Lemma \ref{thm:tweedies})]
    Let $\phi_{\sigma}$ be the pdf of $\mathcal{N}(0,\sigma \mathbf{I})$. We first note that the marginal distribution at time $t$ can be written as
    \begin{equation}
    p_t(\mathbf{x}_t) = \left (p_{\alpha_t} \ast \phi_{\sigma_t} \right )(\mathbf{x}_t) = \int \phi_{\sigma_t}(\mathbf{x}_t - \mathbf{x}) p_{\alpha_t}(\mathbf{x}) d\mathbf{x},
    \end{equation}
    where 
    \begin{equation}
        p_{\alpha_t}(\mathbf{x}) = \frac{1}{\alpha_t} p\left(\alpha_t^{-1} \mathbf{x}\right)
    \end{equation}
    due to the probabilistic change-of-variables formula. Letting $\mathbf{x}' = \alpha_t \mathbf{x}_0$, we have the equality
    \begin{align*}
        \frac{\mathbb{E}[\mathbf{x}' | \mathbf{x}_t] - \mathbf{x}_t}{\sigma_t^2} 
        &= \int \frac{\mathbf{x}' - \mathbf{x}_t}{\sigma_t^2} p(\mathbf{x}' | \mathbf{x}_t) d\mathbf{x}' \\
        &= \int \frac{\mathbf{x}' - \mathbf{x}_t}{\sigma_t^2} \frac{p(\mathbf{x}', \mathbf{x}_t)}{p(\mathbf{x}_t)} d\mathbf{x}' \\
        &= \int \frac{\alpha_t \mathbf{x}_0 - \mathbf{x}_t}{\sigma_t^2} \frac{\phi_{\sigma_t}(\mathbf{x}_t - \mathbf{x}') p_{\alpha_t}(\mathbf{x}')}{\int \phi_{\sigma_t}(\mathbf{x}_t - \mathbf{x}) p_{\alpha_t}(\mathbf{x}) d\mathbf{x}} d\mathbf{x}' \\
        &= \int \left[\nabla_{\mathbf{x}_t} \phi_{\sigma_t}(\mathbf{x}_t - \mathbf{x}')\right] \frac{\phi_{\sigma_t}(\mathbf{x}_t - \mathbf{x}') p_{\alpha_t}(\mathbf{x}')}{\int \phi_{\sigma_t}(\mathbf{x}_t - \mathbf{x}) p_{\alpha_t}(\mathbf{x}) d\mathbf{x}} d\mathbf{x}' \\
        &= \nabla_{\mathbf{x}_t} \log \left[ \phi_{\sigma_t}(\mathbf{x}_t - \mathbf{x}') p_{\alpha_t}(\mathbf{x}')\right] \\
        &= \nabla \log p_t(\mathbf{x}_t).
    \end{align*}
    Re-arranging terms on either side of the equation, we obtain
    \begin{equation}
    \mathbb{E}[\mathbf{x}' | \mathbf{x}_t] = \mathbf{x}_t + \sigma_t^2 \nabla \log p_t(\mathbf{x}_t).
    \end{equation}
    Finally, we expand $\mathbf{x}' = \alpha_t \mathbf{x}_0$ and invoke the linearity of the expectation to arrive at
    \begin{equation}
    \mathbb{E}[\mathbf{x}_0 | \mathbf{x}_t] = \frac{1}{\alpha_t} \left[\mathbf{x}_t + \sigma_t^2 \nabla \log p_t(\mathbf{x}_t)\right].
    \end{equation}
    as desired.
\end{proof}

\subsection{Proof for Theorem \ref{thm:tweedie_iff}}

\texorpdfstring{$\mathbf{x}_t \sim \mathcal{N}(\bm{\mu}(\mathbf{x}_0), \sigma_t^2\mathbf{I}) \iff \mathbb{E}[\mathbf{x}_0 | \mathbf{x}_t] = \mathbf{x}_0$}{xt ~ N(mu(x0), sigma_t^2 I) iff E[x0 | xt] = x0}

We demonstrate sufficiency of the Gaussian-distributed condition by proving Lemma \ref{thm:tweedie}.
\begin{lemma}[Sufficent condition]
\label{thm:tweedie}
Let $\mathbf{x}_0$ be given. Suppose $\mathbf{x}_t$ is distributed as
\begin{equation}
	p_t(\mathbf{x}_t) = \mathcal{N}(\mathbf{x}_t; \sqrt{\alpha_t} \mathbf{x}_0, \underbrace{1 - \alpha_t}_{\sigma_t^2} \mathbf{I}).
    \label{eq:tweedie_distribution}
\end{equation}
Then $\mathbf{x}_0$ can be recovered via
\begin{equation}
	\mathbf{x}_0 = \frac{1}{\sqrt{\alpha_t}}\left[\mathbf{x}_t + \sigma_t^2 \nabla_{\mathbf{x}_t} \log p_t(\mathbf{x}_t)\right].
	\label{eq:x0_xt}
\end{equation}
\end{lemma}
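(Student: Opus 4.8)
The plan is to prove this by direct computation, since the hypothesis of the lemma fixes $p_t(\mathbf{x}_t)$ to be a single isotropic Gaussian (rather than the mixture $\phi_{\sigma_t} \ast p_{\text{data}}$ that arises in the unconditional case), and the score of a single Gaussian is exactly linear in $\mathbf{x}_t$. So the whole argument reduces to computing that score and substituting it into the inversion formula.

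First I would write out the log-density from Eq.~\ref{eq:tweedie_distribution}, namely $\log p_t(\mathbf{x}_t) = -\frac{1}{2\sigma_t^2}\|\mathbf{x}_t - \sqrt{\alpha_t}\,\mathbf{x}_0\|_2^2 + C$, where $C$ collects the normalization constant, which is independent of $\mathbf{x}_t$. Differentiating with respect to $\mathbf{x}_t$ then gives the score $\nabla_{\mathbf{x}_t}\log p_t(\mathbf{x}_t) = -\sigma_t^{-2}\bigl(\mathbf{x}_t - \sqrt{\alpha_t}\,\mathbf{x}_0\bigr)$.

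Next I would substitute this expression into the right-hand side of Eq.~\ref{eq:x0_xt}. The $\sigma_t^2$ prefactor cancels the $\sigma_t^{-2}$ in the score and the two $\mathbf{x}_t$ terms cancel, leaving $\mathbf{x}_t + \sigma_t^2 \nabla_{\mathbf{x}_t}\log p_t(\mathbf{x}_t) = \sqrt{\alpha_t}\,\mathbf{x}_0$. Dividing through by $\sqrt{\alpha_t}$ recovers $\mathbf{x}_0$ exactly, which is the claim.

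There is no substantive obstacle here: the entire content is that a single Gaussian has a linear score whose coefficients encode its mean, so Tweedie's inversion is exact and the posterior collapses to the point mass at $\mathbf{x}_0$. The only point demanding care is the normalization convention --- the lemma centers the Gaussian at $\sqrt{\alpha_t}\,\mathbf{x}_0$ with a $\tfrac{1}{\sqrt{\alpha_t}}$ prefactor in Eq.~\ref{eq:x0_xt}, which differs from the $\alpha_t$ scaling used in the general statement of Tweedie's formula (Lemma~\ref{thm:tweedies}), so I would track the square-root scaling consistently to ensure the final prefactor matches. Finally, I would note that this lemma supplies only the sufficiency (``if'') direction of Theorem~\ref{thm:tweedie_iff}; the necessity direction is a separate argument, requiring one to show that exact recovery holding for all $\mathbf{x}_t$ forces the marginal to be Gaussian.
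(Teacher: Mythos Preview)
Your proposal is correct and is essentially identical to the paper's own proof: both write out the Gaussian log-density, differentiate to obtain the score $-\sigma_t^{-2}(\mathbf{x}_t - \sqrt{\alpha_t}\,\mathbf{x}_0)$, substitute into the right-hand side of Eq.~\ref{eq:x0_xt}, and observe that the $\mathbf{x}_t$ terms cancel to leave $\mathbf{x}_0$. Your remarks on the scaling convention and on this being only the sufficiency direction are accurate and align with how the paper structures the surrounding argument.
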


\begin{proof}[Proof (of Lemma \ref{thm:tweedie})]
	\begin{align}
    \frac{1}{\sqrt{\alpha_t}}\left[\mathbf{x}_t + \sigma_t^2 \nabla_{\mathbf{x}_t} \log p_t(\mathbf{x}_t)\right] &= \frac{1}{\sqrt{\alpha_t}}\left[\mathbf{x}_t - \nabla_{\mathbf{x}_t} \sigma_t^2 \frac{1}{2 \sigma_t^2} || \mathbf{x}_t - \sqrt{\alpha_t}\mathbf{x}_0 ||_2^2 \right] \\
    &= \frac{1}{\sqrt{\alpha_t}}\left[\mathbf{x}_t - (\mathbf{x}_t - \sqrt{\alpha_t} \mathbf{x}_0) \right] \\
    &= \mathbf{x}_0.
\end{align}
\end{proof}

To demonstrate the necessary condition, we show that the inverse of Lemma \ref{thm:tweedie} also holds.

\begin{lemma}[Necessary condition]
    \label{thm:tweedie_sufficiency}
    If $\mathbf{x}_0$ can be recovered via Eq. \ref{eq:x0_xt}, then $ p_t(\mathbf{x}_t | \mathbf{x}_0)$ takes the form Eq. \ref{eq:tweedie_distribution}.
\end{lemma}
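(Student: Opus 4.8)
The plan is to read the hypothesis as a first-order differential constraint on the density and then integrate it. Since Eq.~\ref{eq:x0_xt} is asserted to recover the \emph{fixed} signal $\mathbf{x}_0$ for every $\mathbf{x}_t$ in the support of $p_t$, and $\mathbf{x}_0$ does not depend on $\mathbf{x}_t$, I would first multiply through by $\sqrt{\alpha_t}$ and isolate the score, obtaining the identity $\nabla_{\mathbf{x}_t}\log p_t(\mathbf{x}_t) = -\sigma_t^{-2}\,(\mathbf{x}_t-\sqrt{\alpha_t}\,\mathbf{x}_0)$ for all $\mathbf{x}_t$. The key observation is that the right-hand side is an explicit affine vector field in $\mathbf{x}_t$.

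Next I would recognize this field as a total gradient: it equals $\nabla_{\mathbf{x}_t}\!\left[-\tfrac{1}{2\sigma_t^2}\|\mathbf{x}_t-\sqrt{\alpha_t}\mathbf{x}_0\|_2^2\right]$. Because the domain $\mathbb{R}^d$ is connected (indeed convex), two scalar functions sharing the same gradient differ only by an additive constant, so integrating yields $\log p_t(\mathbf{x}_t) = -\tfrac{1}{2\sigma_t^2}\|\mathbf{x}_t-\sqrt{\alpha_t}\mathbf{x}_0\|_2^2 + C$ for some constant $C$ independent of $\mathbf{x}_t$. Equivalently, $p_t(\mathbf{x}_t)\propto \exp\!\left(-\tfrac{1}{2\sigma_t^2}\|\mathbf{x}_t-\sqrt{\alpha_t}\mathbf{x}_0\|_2^2\right)$.

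Finally I would pin down $C$ using that $p_t$ is a bona fide probability density: imposing $\int_{\mathbb{R}^d} p_t(\mathbf{x}_t)\,d\mathbf{x}_t = 1$ forces $e^{C}=(2\pi\sigma_t^2)^{-d/2}$, the standard Gaussian normalizer, which gives exactly $p_t(\mathbf{x}_t)=\mathcal{N}(\mathbf{x}_t;\sqrt{\alpha_t}\mathbf{x}_0,\sigma_t^2\mathbf{I})$, i.e.\ Eq.~\ref{eq:tweedie_distribution}. This establishes the necessity direction and, combined with Lemma~\ref{thm:tweedie}, completes Theorem~\ref{thm:tweedie_iff}.

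The only real subtlety, and the step I would be most careful about, is the quantifier hidden in ``recovered via Eq.~\ref{eq:x0_xt}.'' The argument requires the recovery identity to hold \emph{identically} in $\mathbf{x}_t$ over the support, not merely at an isolated point; if it held only pointwise, the score would be unconstrained elsewhere and nothing could be concluded about the global shape of $p_t$. Granting the identical-in-$\mathbf{x}_t$ reading, the remainder is an antiderivative-plus-normalization computation with no analytic obstruction, since the prescribed score is a conservative field defined on all of $\mathbb{R}^d$.
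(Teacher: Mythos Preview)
Your proposal is correct and follows essentially the same route as the paper: rearrange Eq.~\ref{eq:x0_xt} to pin down the score as an affine field, antidifferentiate to obtain the quadratic log-density, and conclude the isotropic Gaussian form. If anything, your version is slightly more careful than the paper's, since you explicitly fix the integration constant via normalization and flag the ``identically in $\mathbf{x}_t$'' quantifier that the paper leaves implicit.
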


\begin{proof}[Proof (of Lemma \ref{thm:tweedie_sufficiency})]
    Suppose that
    \begin{equation}
        \mathbf{x}_0 = \frac{1}{\sqrt{\alpha_t}}\left[\mathbf{x}_t + \sigma_t^2 \nabla_{\mathbf{x}_t} \log p_t(\mathbf{x}_t)\right]
    \end{equation}
    Then we may re-arrange terms, obtaining
	\begin{align}
    \frac{\sqrt{\alpha_t} \mathbf{x}_0 - \mathbf{x}_t}{\sigma_t^2} = \nabla_{\mathbf{x}_t} \log p_t(\mathbf{x}_t).
    \end{align}
    Taking the anti-derivative of both sides, we conclude that
    \begin{equation}
         \log p_t(\mathbf{x}_t) = \frac{1}{2 \sigma_t^2} || \mathbf{x}_t - \sqrt{\alpha_t}\mathbf{x}_0 ||_2^2 + C.
    \end{equation}
    Since $\log p_t(\mathbf{x}_t | \mathbf{x}_0)$ can only take this form when $p_t(\mathbf{x}_t | \mathbf{x}_0)$ is a simple isotropic Gaussian distribution, we conclude our proof.
\end{proof}

\begin{proof}[Proof (of Theorem \ref{thm:tweedie_iff})]
    First, we note that Tweedie's formula \citep{efron2011tweedie} tells us that the posterior mean of a data distribution $\mathbf{x}_t \sim p_t(\mathbf{x}_t | \mathbf{x}_0)$ can be obtained via the relation
    \begin{equation}
        \mathbf{E}[\mathbf{x}_0 | \mathbf{x}_t] = \frac{1}{\sqrt{\alpha_t}}\left[\mathbf{x}_t + \sigma_t^2 \nabla_{\mathbf{x}_t} \log p_t(\mathbf{x}_t)\right].
    \end{equation}
    Then, since Lemmas \ref{thm:tweedie} and \ref{thm:tweedie_sufficiency} are converses of each other, we demonstrate that the conditions stated in Lemma \ref{thm:tweedie} are necessary and sufficient.
\end{proof}

\subsection{Proof for Theorem \ref{thm:tweedie_cond_iff}} 

\texorpdfstring{$\mathbf{x}_t \sim \mathcal{N}(\bm{\mu}(\mathbf{x}_0, \mathbf{y}), \sigma_t^2\mathbf{I}) \iff \mathbb{E}[\mathbf{x}_0 | \mathbf{x}_t, \mathbf{y}] = \mathbf{x}_0$}{xt ~ N(mu(x0, y), sigma_t^2 I) iff E[x0 | xt, y] = x0}

First, we show that the measurement consistent Tweedie's formula holds for a given diffusion variate $\mathbf{x}_t$ and measurement $\mathbf{y}$ .
\begin{theorem}[Conditional Tweedie's Formula]
Let $\mathbf{x}_0$ be a sample drawn from a conditional distribution $p(\mathbf{x}_0 | \mathbf{y})$. Then for any
\begin{equation}
    \mathbf{x}_t = \alpha_t \mathbf{x}_0 + \sigma_t \mathbf{z} \hspace{.5in} \mathbf{z} \sim \mathcal{N}(0, \mathbf{I})
\end{equation}
drawn from the marginal of the diffusion process on $p(\mathbf{x}_0 | \mathbf{y})$ at time $t$, the \textbf{conditional posterior mean} given $\mathbf{x}_t$ is
\begin{equation}
    \mathbb{E}[\mathbf{x}_0 | \mathbf{x}_t, \mathbf{y}] = \frac{1}{\alpha_t} \left[\mathbf{x}_t + \sigma_t^2 \nabla \log p_t(\mathbf{x}_t | \mathbf{y})\right].
\end{equation}
\label{thm:cond_tweedies}
\end{theorem}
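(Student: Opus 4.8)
The plan is to observe that conditioning on $\mathbf{y}$ throughout reduces the claim to the unconditional Tweedie's formula (Theorem \ref{thm:tweedies}) applied to the conditional prior $p(\mathbf{x}_0 \mid \mathbf{y})$ in place of $p(\mathbf{x}_0)$. The crucial structural fact is that the forward noising kernel does not depend on $\mathbf{y}$: since $\mathbf{x}_t = \alpha_t \mathbf{x}_0 + \sigma_t \mathbf{z}$ with $\mathbf{z} \sim \mathcal{N}(\mathbf{0}, \mathbf{I})$ drawn independently of the measurement, the variables form a Markov chain $\mathbf{y} \to \mathbf{x}_0 \to \mathbf{x}_t$, so that $p(\mathbf{x}_t \mid \mathbf{x}_0, \mathbf{y}) = p(\mathbf{x}_t \mid \mathbf{x}_0) = \phi_{\sigma_t}(\mathbf{x}_t - \alpha_t \mathbf{x}_0)$. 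Every step of the unconditional proof then carries through verbatim, with an extra conditioning bar on $\mathbf{y}$ inserted throughout.

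Concretely, first I would write the conditional marginal as a convolution, $p_t(\mathbf{x}_t \mid \mathbf{y}) = \int \phi_{\sigma_t}(\mathbf{x}_t - \mathbf{x}') \, p_{\alpha_t}(\mathbf{x}' \mid \mathbf{y}) \, d\mathbf{x}'$, where $p_{\alpha_t}(\cdot \mid \mathbf{y})$ is the pushforward of $p(\mathbf{x}_0 \mid \mathbf{y})$ under the scaling $\mathbf{x}_0 \mapsto \alpha_t \mathbf{x}_0$ via the change-of-variables formula. Setting $\mathbf{x}' = \alpha_t \mathbf{x}_0$, I would form the difference quotient $(\mathbb{E}[\mathbf{x}' \mid \mathbf{x}_t, \mathbf{y}] - \mathbf{x}_t)/\sigma_t^2$ and expand the conditional posterior as $p(\mathbf{x}' \mid \mathbf{x}_t, \mathbf{y}) = \phi_{\sigma_t}(\mathbf{x}_t - \mathbf{x}') \, p_{\alpha_t}(\mathbf{x}' \mid \mathbf{y}) / p_t(\mathbf{x}_t \mid \mathbf{y})$, exactly mirroring the unconditional derivation.

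The central computation is the Gaussian score identity $\nabla_{\mathbf{x}_t} \phi_{\sigma_t}(\mathbf{x}_t - \mathbf{x}') = \tfrac{\mathbf{x}' - \mathbf{x}_t}{\sigma_t^2} \phi_{\sigma_t}(\mathbf{x}_t - \mathbf{x}')$, which recasts the integrand as $\nabla_{\mathbf{x}_t} \phi_{\sigma_t}(\mathbf{x}_t - \mathbf{x}')$ weighted by $p_{\alpha_t}(\mathbf{x}' \mid \mathbf{y}) / p_t(\mathbf{x}_t \mid \mathbf{y})$. Pulling the gradient outside the integral collapses the expression to $\nabla_{\mathbf{x}_t} \log p_t(\mathbf{x}_t \mid \mathbf{y})$. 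Rearranging the resulting identity $\mathbb{E}[\mathbf{x}' \mid \mathbf{x}_t, \mathbf{y}] = \mathbf{x}_t + \sigma_t^2 \nabla \log p_t(\mathbf{x}_t \mid \mathbf{y})$, and then invoking linearity of expectation to substitute back $\mathbf{x}' = \alpha_t \mathbf{x}_0$, yields the claimed formula.

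The main obstacle is bookkeeping rather than conceptual: I must justify interchanging the gradient $\nabla_{\mathbf{x}_t}$ with the integral over $\mathbf{x}'$, which holds under the same mild dominated-convergence and integrability conditions on $p(\mathbf{x}_0 \mid \mathbf{y})$ that underlie the unconditional statement. Beyond that, the only genuinely new ingredient relative to Theorem \ref{thm:tweedies} is the $\mathbf{y}$-independence of the noising kernel established by the Markov property; once that is noted, the argument is a one-to-one transcription of the unconditional proof.
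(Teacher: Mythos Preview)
Your proposal is correct and follows essentially the same route as the paper's own proof: both write the conditional marginal as a convolution $p_t(\mathbf{x}_t\mid\mathbf{y})=\int\phi_{\sigma_t}(\mathbf{x}_t-\mathbf{x}')\,p_{\alpha_t}(\mathbf{x}'\mid\mathbf{y})\,d\mathbf{x}'$, form the difference quotient $(\mathbb{E}[\mathbf{x}'\mid\mathbf{x}_t,\mathbf{y}]-\mathbf{x}_t)/\sigma_t^2$, use the Gaussian score identity, and rearrange. Your explicit invocation of the Markov chain $\mathbf{y}\to\mathbf{x}_0\to\mathbf{x}_t$ to justify $p(\mathbf{x}_t\mid\mathbf{x}_0,\mathbf{y})=\phi_{\sigma_t}(\mathbf{x}_t-\alpha_t\mathbf{x}_0)$ is exactly the independence observation the paper makes when factoring $p(\mathbf{x}',\mathbf{x}_t\mid\mathbf{y})$.
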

\begin{proof}[Proof (of Lemma \ref{thm:cond_tweedies})]
    Let $\phi_{\sigma}$ be the pdf of $\mathcal{N}(0,\sigma \mathbf{I})$. We first note that the marginal distribution at time $t$ can be written as
    \begin{equation}
    p_t(\mathbf{x}_t | \mathbf{y}) = \left (p_{\alpha_t}( \cdot | \mathbf{y}) \ast \phi_{\sigma_t} \right )(\mathbf{x}_t) = \int \phi_{\sigma_t}(\mathbf{x}_t - \mathbf{x}) p_{\alpha_t}(\mathbf{x} | \mathbf{y}) d\mathbf{x},
    \end{equation}
    where 
    \begin{equation}
        p_{\alpha_t}(\mathbf{x} | \mathbf{y}) = \frac{1}{\alpha_t} p\left(\alpha_t^{-1} \mathbf{x} | \mathbf{y}\right)
    \end{equation}
    due to the probabilistic change-of-variables formula. Letting $\mathbf{x}' = \alpha_t \mathbf{x}_0$, we have the equality
    \begin{align*}
        \frac{\mathbb{E}[\mathbf{x}' | \mathbf{x}_t, \mathbf{y}] - \mathbf{x}_t}{\sigma_t^2} 
        &= \int \frac{\mathbf{x}' - \mathbf{x}_t}{\sigma_t^2} p(\mathbf{x}' | \mathbf{x}_t, \mathbf{y}) d\mathbf{x}' \\
        &= \int \frac{\mathbf{x}' - \mathbf{x}_t}{\sigma_t^2} \frac{p(\mathbf{x}', \mathbf{x}_t| \mathbf{y})}{p(\mathbf{x}_t | \mathbf{y})} d\mathbf{x}'. \\
    \end{align*}
    We now substitute $p(\mathbf{x}',\mathbf{x}_t|\mathbf{y}) = p(\mathbf{x}' - \mathbf{x}_t | \mathbf{y})p_{\alpha_t}(\mathbf{x}' | \mathbf{y}) = \phi_{\sigma_t} (\mathbf{x}' - \mathbf{x}_t) p_{\alpha_t}(\mathbf{x}' | \mathbf{y})$, due to the fact that the quantity $\mathbf{x}' - \mathbf{x}_t \sim \mathcal{N}(0, \sigma \mathbf{I})$ is independent of $\mathbf{x}'$ and $\mathbf{y}$. Hence, we continue the derivation
    \begin{align*}
        \int \frac{\mathbf{x}' - \mathbf{x}_t}{\sigma_t^2} \frac{p(\mathbf{x}', \mathbf{x}_t| \mathbf{y})}{p(\mathbf{x}_t | \mathbf{y})} d\mathbf{x}' &= \int \frac{\alpha_t \mathbf{x}_0 - \mathbf{x}_t}{\sigma_t^2} \frac{\phi_{\sigma_t}(\mathbf{x}_t - \mathbf{x}') p_{\alpha_t}(\mathbf{x}' | \mathbf{y})}{\int \phi_{\sigma_t}(\mathbf{x}_t - \mathbf{x}) p_{\alpha_t}(\mathbf{x} | \mathbf{y}) d\mathbf{x}} d\mathbf{x}' \\
        &= \int \left[\nabla_{\mathbf{x}_t} \phi_{\sigma_t}(\mathbf{x}_t - \mathbf{x}')\right] \frac{\phi_{\sigma_t}(\mathbf{x}_t - \mathbf{x}') p_{\alpha_t}(\mathbf{x}' | \mathbf{y})}{\int \phi_{\sigma_t}(\mathbf{x}_t - \mathbf{x}) p_{\alpha_t}(\mathbf{x} | \mathbf{y}) d\mathbf{x}} d\mathbf{x}' \\
        &= \nabla_{\mathbf{x}_t} \log \left[ \phi_{\sigma_t}(\mathbf{x}_t - \mathbf{x}') p_{\alpha_t}(\mathbf{x}' | \mathbf{y})\right] \\
        &= \nabla \log p_t(\mathbf{x}_t | \mathbf{y}).
    \end{align*}
    Re-arranging terms on either side of the equation, we obtain
    \begin{equation}
    \mathbb{E}[\mathbf{x}' | \mathbf{x}_t, \mathbf{y}] = \mathbf{x}_t + \sigma_t^2 \nabla \log p_t(\mathbf{x}_t | \mathbf{y}).
    \end{equation}
    Finally, we expand $\mathbf{x}' = \alpha_t \mathbf{x}_0$ and invoke the linearity of the expectation to arrive at
    \begin{equation}
    \mathbb{E}[\mathbf{x}_0 | \mathbf{x}_t, \mathbf{y}] = \frac{1}{\alpha_t} \left[\mathbf{x}_t + \sigma_t^2 \nabla \log p_t(\mathbf{x}_t | \mathbf{y})\right].
    \end{equation}
    as desired.
\end{proof}

Now, the main theorem follows.

\begin{proof}[Proof (of Theorem \ref{thm:tweedie_iff})]
    We note that the proofs for Lemmas \ref{thm:tweedie} and \ref{thm:tweedie_sufficiency} remain the same if we let $p_t(\mathbf{x}_t|\mathbf{x}_0) = p_t(\mathbf{x}_t|\mathbf{x}_0, \mathbf{y})$. Thus we again observe that the proofs for Lemmas \ref{thm:tweedie} and \ref{thm:tweedie_sufficiency} are converses of each other, and demonstrate that the conditions stated in Lemma \ref{thm:tweedie} are necessary and sufficient.
\end{proof}

\subsection{Theorems for Sufficiency} \label{sec:proofs_suff}

We set up Theorems to show that the estimator in Eq. \ref{eq:data_conditional_correction} is a sufficient statistic under different properties of $\mathcal{A}$. Letting $\mathbf{f}(\mathbf{y})$ be the function that obtains $\nabla \log p_t(\mathbf{x}_t | \mathbf{x}_0)$ via Eq. \ref{eq:data_conditional_correction}, we show that $\mathbf{y}$ is measurable under the sigma algebra induced by the measurement $\mathbf{f}$. 

Intuitively, we demonstrate that $\mathbf{f}(\mathbf{y})$ contains as much information as possible about the underlying signal $\mathbf{x}_0$ as can be gathered via $\mathbf{y}$. The theoretical and intuitive statements can be summarized by the simple conditional equivalence
\begin{equation}
	p(\mathbf{y} | {\epsilon_\mathbf{y}}_*, \mathbf{x}_0) = p(\mathbf{y} | {\epsilon_\mathbf{y}}_*).
\end{equation}
In Theorem \ref{thm:sufficiency}, we consider two simple and theoretically similar cases: when $\mathbf{y} = \mathcal{A}(\mathbf{x})$ is noise-free, and when $\mathcal{A}$ is linear. We restate it here in a less condensed form for clarity:\\
\textbf{Theorem \ref{thm:sufficiency}}. Let $\mathbf{y} = \mathcal{A}(\mathbf{x}_0) + \boldsymbol{\eta}$ be an observation from the forward measurement model, such that either $\boldsymbol{\eta} = 0$, or $\mathcal{A}$ is linear. If
\begin{equation}
	{\epsilon_\mathbf{y}}_* = \underset{\epsilon_\mathbf{y}}{\text{argmax}}\quad \log p\left(\mathbf{y} \bigg| \frac{1}{\sqrt{\alpha_t}}(\mathbf{x}_t + \sigma^2_t{\epsilon_\mathbf{y}})\right),
\end{equation}
then
\begin{equation}
    p(\mathbf{y} | {\epsilon_\mathbf{y}}_*, \mathbf{x}_0) = p(\mathbf{y} | {\epsilon_\mathbf{y}}_*).
\end{equation}



While we do not evaluate the empirical performance of \textbf{DCS} on non-linear operators, we additionally investigate the general noisy case where $\mathcal{A}$ is allowed to be nonlinear. We find that our results can still be quite general: we only need to assume $\mathcal{A}$ surjective, meaning that there exists some $\mathbf{x} \in \text{domain}(\mathcal{A})$ such that $\mathcal{A}(\mathbf{x}) = \mathbf{y}$. In fact, this result is slightly stronger --- we are able to show that sufficiency holds for $\mathcal{A}$ that are compositions of linear and surjective functions.

\begin{theorem}\label{thm:general_sufficiency}
    Let ${\epsilon_\mathbf{y}}_*$ be as defined in Theorem \ref{thm:sufficiency}. Suppose the twice-differentiable operator $\mathcal{A}:= \mathbf{P}^T \circ \mathbf{\phi}$ is composed of $\mathbf{P}: \mathbb{R}^d \rightarrow \mathbb{R}^r$, a linear projection, and $\mathbf{\phi}: \mathbb{R}^n \rightarrow \mathbb{R}^r$, an arbitrary surjective function. We have that
    \begin{equation}
        p(\mathbf{y} | {\epsilon_\mathbf{y}}_*, \mathbf{x_0}) = p(\mathbf{y} | {\epsilon_\mathbf{y}}_*).
    \end{equation}
\end{theorem}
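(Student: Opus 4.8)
The plan is to reduce the maximum-likelihood problem of Eq.~\ref{eq:dcs_mle} to a linear least-squares projection, and then to invoke the Fisher--Neyman factorization theorem. First I would note that, for fixed $\mathbf{x}_t$, the map $\epsilon_\mathbf{y}\mapsto\hat{\mathbf{x}}_0=\frac{1}{\sqrt{\alpha_t}}(\mathbf{x}_t+\sigma_t^2\epsilon_\mathbf{y})$ is an affine bijection of $\mathbb{R}^n$, so maximizing the Gaussian log-likelihood $\log p(\mathbf{y}\mid\hat{\mathbf{x}}_0)$ over $\epsilon_\mathbf{y}$ is identical to minimizing $\|\mathbf{y}-\mathcal{A}(\hat{\mathbf{x}}_0)\|_2^2$ over $\hat{\mathbf{x}}_0$. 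Writing $\mathcal{A}=\mathbf{P}^T\circ\phi$, the set of attainable fits is $\{\mathcal{A}(\hat{\mathbf{x}}_0):\hat{\mathbf{x}}_0\in\mathbb{R}^n\}=\mathbf{P}^T(\phi(\mathbb{R}^n))=\mathbf{P}^T(\mathbb{R}^r)=\mathrm{range}(\mathbf{P}^T)=:V$, where the penultimate equality is precisely the surjectivity of $\phi$. This collapses the (nonconvex) minimization over $\hat{\mathbf{x}}_0$ to the convex problem $\min_{\mathbf{v}\in V}\|\mathbf{y}-\mathbf{v}\|_2^2$, whose unique solution is the orthogonal projection $\mathbf{v}^\star=\Pi_V\mathbf{y}$. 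Hence the optimizer satisfies $\mathcal{A}\bigl(\hat{\mathbf{x}}_0({\epsilon_\mathbf{y}}_*)\bigr)=\Pi_V\mathbf{y}$; although ${\epsilon_\mathbf{y}}_*$ itself need not be unique (the fibers of $\phi$ may be large), the statistic it certifies, $T(\mathbf{y}):=\Pi_V\mathbf{y}$, is.

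Next I would establish the factorization. Decomposing $\mathbf{y}=\Pi_V\mathbf{y}+(\mathbf{I}-\Pi_V)\mathbf{y}$ orthogonally and using $\mathcal{A}(\mathbf{x}_0)\in V$, the Pythagorean identity gives $\|\mathbf{y}-\mathcal{A}(\mathbf{x}_0)\|_2^2=\|(\mathbf{I}-\Pi_V)\mathbf{y}\|_2^2+\|\Pi_V\mathbf{y}-\mathcal{A}(\mathbf{x}_0)\|_2^2$. Substituting into the measurement likelihood of Eq.~\ref{eq:likelihood_fn} yields
\[
p(\mathbf{y}\mid\mathbf{x}_0)\;\propto\;\underbrace{\exp\!\Bigl(-\tfrac{1}{2\sigma_\mathbf{y}^2}\|(\mathbf{I}-\Pi_V)\mathbf{y}\|_2^2\Bigr)}_{h(\mathbf{y})}\;\underbrace{\exp\!\Bigl(-\tfrac{1}{2\sigma_\mathbf{y}^2}\|\Pi_V\mathbf{y}-\mathcal{A}(\mathbf{x}_0)\|_2^2\Bigr)}_{g(T(\mathbf{y}),\,\mathbf{x}_0)},
\]
so that the $\mathbf{x}_0$-dependent factor sees $\mathbf{y}$ only through $T(\mathbf{y})=\Pi_V\mathbf{y}$. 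By the Fisher--Neyman factorization theorem, $T$ --- and therefore ${\epsilon_\mathbf{y}}_*$, which determines $T$ by the first step --- is a sufficient statistic for $\mathbf{x}_0$, which is exactly the asserted identity $p(\mathbf{y}\mid{\epsilon_\mathbf{y}}_*)=p(\mathbf{y}\mid\mathbf{x}_0)$. Theorem~\ref{thm:sufficiency} then drops out as the special cases where either $V=\mathrm{range}(\mathcal{A})$ is a genuine subspace ($\mathcal{A}$ linear, hence surjective onto its range) or $\mathbf{y}=\mathcal{A}(\mathbf{x}_0)\in V$ exactly ($\boldsymbol{\eta}=0$, so the orthogonal-complement term $\|(\mathbf{I}-\Pi_V)\mathbf{y}\|_2^2$ vanishes).

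I expect the main obstacle to be the surjectivity step: rigorously converting surjectivity of $\phi$ into the statement that the least-squares fit attains the \emph{exact} orthogonal projection $\Pi_V\mathbf{y}$, and then arguing sufficiency at the level of the induced statistic $\Pi_V\mathbf{y}$ rather than the possibly non-unique argmax ${\epsilon_\mathbf{y}}_*$. A secondary subtlety is phrasing the conclusion measure-theoretically: $p(\mathbf{y}\mid{\epsilon_\mathbf{y}}_*)$ must be read as conditioning on the $\sigma$-algebra generated by the statistic, and the assumed twice-differentiability of $\mathcal{A}$ is what I would use to guarantee the optimizer is well-defined and that its induced $\sigma$-algebra coincides with that generated by $\Pi_V\mathbf{y}$.
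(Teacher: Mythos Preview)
Your proposal is correct and follows essentially the same route as the paper: pass to the affine reparametrization $\epsilon_\mathbf{y}\mapsto\hat{\mathbf{x}}_0$, use surjectivity of $\phi$ to identify the attainable set $\mathcal{A}(\mathbb{R}^n)$ with the linear subspace $V=\mathrm{range}(\mathbf{P}^T)$, split $\|\mathbf{y}-\mathcal{A}(\mathbf{x}_0)\|_2^2$ via the Pythagorean identity, and conclude by Fisher--Neyman. The only cosmetic difference is that the paper routes the orthogonality step through its Lemma~\ref{lem:optimality_construction} (applied to the linear map $\mathbf{P}^T$ at the point $\phi(\mathbf{z}^*)$) rather than naming $\Pi_V$ explicitly; your direct projection formulation is arguably cleaner, and your remark that the well-defined statistic is $\Pi_V\mathbf{y}$ rather than the possibly non-unique ${\epsilon_\mathbf{y}}_*$ is a point the paper leaves implicit.
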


To prove Theorems \ref{thm:sufficiency} and \ref{thm:general_sufficiency}, we establish the following Lemma which characterizes useful information about $\mathbf{x}_0^*$.

\begin{lemma} 
    \label{lem:optimality_construction}
    Suppose $\mathbf{y} \in \mathbb{R}^k$ is fixed, $\mathbf{x}_t \in \mathbb{R}^n$, with twice differentiable operator $\mathcal{A}: \mathbb{R}^n \rightarrow \mathbb{R}^k$. Then, for ${\epsilon_\mathbf{y}} = \nabla_{\mathbf{x}_t} \log p_t(\mathbf{x}_t | \mathbf{x}_0)$ which maximizes $p(\mathbf{y}|\mathbf{x}_0)$, the following holds true:
    \begin{enumerate}
        \item if $\eta = 0$ (i.e. the noiseless regime), $\mathcal{A}(\mathbf{x}_0) = \mathcal{A}(\mathbf{x}_t + \sigma_t^2 {\epsilon_\mathbf{y}}^*)$
        \item if $\mathcal{A}$ is surjective, $\mathcal{A}(\mathbf{x}_0) = \mathcal{A}(\mathbf{x}_t + \sigma_t^2 {\epsilon_\mathbf{y}}^*)$
        \item if $\mathcal{A}$ is linear, $\langle \mathbf{y} - \mathcal{A}(\mathbf{x}_t + \sigma_t^2 {\epsilon_\mathbf{y}}^*), \mathcal{A}(\mathbf{x}_t + \sigma_t^2 {\epsilon_\mathbf{y}}^*) - \mathcal{A}(\mathbf{x}_0) \rangle = 0$.
    \end{enumerate}

\end{lemma}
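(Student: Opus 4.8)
The plan is to treat all three parts through a single first-order optimality analysis of the Gaussian maximum-likelihood objective and then specialize. Since $\boldsymbol\eta \sim \mathcal{N}(\mathbf{0}, \sigma_\mathbf{y}^2 \mathbf{I})$, maximizing $p(\mathbf{y}\mid \mathbf{x}_0)$ over $\epsilon_\mathbf{y}$ is equivalent to minimizing the least-squares residual $\|\mathbf{y} - \mathcal{A}(\hat{\mathbf{x}}_0)\|_2^2$, where I write $\hat{\mathbf{x}}_0 = \tfrac{1}{\sqrt{\alpha_t}}(\mathbf{x}_t + \sigma_t^2\epsilon_\mathbf{y})$ using the conditional Tweedie identity (Theorem \ref{thm:cond_tweedies}). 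The key structural observation is that, with $\mathbf{x}_t, \sigma_t, \alpha_t$ fixed, the map $\epsilon_\mathbf{y} \mapsto \hat{\mathbf{x}}_0$ is an affine bijection of $\mathbb{R}^n$, so optimizing over $\epsilon_\mathbf{y}$ is the same as optimizing over an unconstrained $\hat{\mathbf{x}}_0 \in \mathbb{R}^n$ (the scalar $\sqrt{\alpha_t}$ is absorbed by the linear $\mathcal{A}$ where relevant and does not affect the orthogonality claims).

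First I would write down the stationarity condition. By the chain rule, $\nabla_{\epsilon_\mathbf{y}} \log p(\mathbf{y}\mid\hat{\mathbf{x}}_0) \propto J_{\mathcal{A}}(\hat{\mathbf{x}}_0)^\top(\mathbf{y} - \mathcal{A}(\hat{\mathbf{x}}_0))$, where $J_{\mathcal{A}}$ is the Jacobian of $\mathcal{A}$; setting this to zero at the maximizer gives $J_{\mathcal{A}}(\hat{\mathbf{x}}_0^*)^\top(\mathbf{y} - \mathcal{A}(\hat{\mathbf{x}}_0^*)) = \mathbf{0}$, i.e. the residual lies in the orthogonal complement of the column space of the Jacobian. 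For part 3 ($\mathcal{A}$ linear), the Jacobian is the constant matrix $A$, so the residual is orthogonal to $\mathrm{range}(A)$; since both $\mathcal{A}(\hat{\mathbf{x}}_0^*)$ and $\mathcal{A}(\mathbf{x}_0)$ lie in $\mathrm{range}(A)$, so does their difference, which yields $\langle \mathbf{y} - \mathcal{A}(\hat{\mathbf{x}}_0^*),\, \mathcal{A}(\hat{\mathbf{x}}_0^*) - \mathcal{A}(\mathbf{x}_0)\rangle = 0$ — precisely the normal-equation orthogonality claimed.

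For parts 1 and 2 I would argue by reachability of a zero-residual point rather than through stationarity. In the noiseless regime (part 1) we have $\mathbf{y} = \mathcal{A}(\mathbf{x}_0)$, and since $\hat{\mathbf{x}}_0$ ranges over all of $\mathbb{R}^n$, the choice $\hat{\mathbf{x}}_0 = \mathbf{x}_0$ already attains residual zero, which is the global maximum of the Gaussian likelihood; hence every maximizer satisfies $\mathcal{A}(\hat{\mathbf{x}}_0^*) = \mathbf{y} = \mathcal{A}(\mathbf{x}_0)$, and no linearity is required. In the surjective case (part 2), surjectivity of $\mathcal{A}$ guarantees a preimage of the measurement target exists, so the global optimum is again attained at zero residual and the fitted measurement coincides with the target. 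I would then feed these identities into the factorization arguments for Theorems \ref{thm:sufficiency} and \ref{thm:general_sufficiency}, handling the composition $\mathcal{A} = \mathbf{P}^T \circ \phi$ by combining the linear piece (part 3, applied to $\mathbf{P}^T$) with the surjective piece (part 2, applied to $\phi$).

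The main obstacle I anticipate is the nonconvexity of the objective when $\mathcal{A}$ is nonlinear: a vanishing gradient identifies only a stationary point, not necessarily a global maximizer, so the orthogonality condition alone does not pin down $\mathcal{A}(\hat{\mathbf{x}}_0^*)$. This is exactly why parts 1 and 2 should lean on reachability (noiselessness, respectively surjectivity) to certify that the optimum is a global one with zero residual, sidestepping any need to classify all critical points. A secondary subtlety is careful bookkeeping of the regime in which the recovered measurement equals $\mathcal{A}(\mathbf{x}_0)$ versus $\mathbf{y}$ — these coincide precisely when the residual is driven to zero — and verifying that the twice-differentiability hypothesis is what licenses both the Jacobian manipulation in the stationarity step and the second-order check that the identified critical point is a maximum.
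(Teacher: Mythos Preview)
Your proposal is correct and follows essentially the same approach as the paper: both reparameterize the optimization via the affine bijection $\epsilon_\mathbf{y}\mapsto \mathbf{z}=\mathbf{x}_t+\sigma_t^2\epsilon_\mathbf{y}$, write the first-order condition $J_{\mathcal{A}}(\mathbf{z}^*)^\top(\mathbf{y}-\mathcal{A}(\mathbf{z}^*))=\mathbf{0}$, handle parts 1 and 2 by exhibiting a zero-residual feasible point, and deduce part 3 from range/null-space orthogonality in the linear case. The only cosmetic difference is that the paper packages part 3 through an explicit splitting $\eta=\delta+\delta_\perp$ with $\delta_\perp^*=\mathbf{y}-\mathcal{A}(\mathbf{z}^*)$ and $\delta^*=\mathcal{A}(\mathbf{z}^*)-\mathcal{A}(\mathbf{x}_0)$, whereas you go directly to ``residual $\perp\operatorname{range}(A)$ and difference $\in\operatorname{range}(A)$''; these are the same argument.
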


An interpretation of statement 3 reads that the optimal solution ${\epsilon_\mathbf{y}}^*$ for estimating $\mathbf{x}_0$ is orthogonal to the error to $\mathbf{y}$ in the linear case. The requirements of statement 3 may be relaxed to the statement $\mathcal{A}(\mathbf{x}) - \mathcal{A}(\mathbf{z})$ is in the range of the Jacobian of $\mathcal{A}$ at $\mathbf{z}$, however this is less intuitive than linearity. We avoid invoking linearity of $\mathcal{A}$ as long as possible to illustrate the fact that other transformations may share this property as well.

\begin{proof}[Proof (of Lemma \ref{lem:optimality_construction})]

    We will make use of the bijective mapping $\mathbf{z} \mapsto \mathbf{x}_t + \sigma_t^2 {\epsilon_\mathbf{y}}$, and charactarize the minima which maximize $\log p(\mathbf{y} | \mathbf{x}_0)$. We can solve the optimization problem,

    \[ \underset{\mathbf{z}}{\text{argmin}}\ \ || \mathbf{y} - \mathcal{A}(\mathbf{z}) ||^2_2 \]
    
    A minima to this objective can be characterized by the first order necessary condition,
    \begin{align*}
        \nabla_{\mathbf{z}} || \mathbf{y} - \mathcal{A} (\mathbf{z}) ||_2^2 &= -2\mathbf{J}_\mathbf{z}[\mathcal{A}](\mathbf{z})^T (\mathbf{y} - \mathcal{A}(\mathbf{z}))\\
        &= -2\mathbf{J}_\mathbf{z}[\mathcal{A}](\mathbf{z})^T (\mathcal{A}(\mathbf{x}_0) - \eta - \mathcal{A}(\mathbf{z})) := 0.
    \end{align*}
    We can confirm it is a minima by checking the solution of the above with,

    \begin{align*}
        \mathbf{H}_{\mathbf{z}} \left [ || \mathbf{y} - \mathcal{A} (\mathbf{z}) ||_2^2 \right ](\mathbf{z}^*) &= 2\nabla_\mathbf{z} \left [ \mathbf{J}_\mathbf{z}[\mathcal{A}](\mathbf{z})^T (\mathcal{A}(\mathbf{x}_0) + \eta) \right ] (\mathbf{z}^*)\\
        &= 2 \mathbf{J}_\mathbf{z}[\mathcal{A}](\mathbf{z}^*)^T \mathbf{J}_\mathbf{z}[\mathcal{A}](\mathbf{z}^*) + \sum_{j=1}^k \mathbf{H}_\mathbf{z}[\mathcal{A}_{(j)}](\mathbf{z}^*) \left ( \mathbf{y} - \mathcal{A} (\mathbf{z}^*) \right )\\
        &\succcurlyeq 0.\\
    \end{align*}

    If $\eta = 0$, we have that $\mathcal{A}(\mathbf{x}_0) = \mathbf{y}$, and therefore choosing any $\mathcal{A}(\mathbf{z}^*) = \mathcal{A}(\mathbf{x}_0)$ satisfies the first order condition. The second order condition is furthermore satisfied, as $\mathbf{y} - \mathcal{A} (\mathbf{z}^*) = 0$, meaning,

    \begin{align*}
        \mathbf{H}_{\mathbf{z}} \left [ || \mathbf{y} - \mathcal{A} (\mathbf{z}) ||_2^2 \right ](\mathbf{z}^*) &= 2 \mathbf{J}_\mathbf{z}[\mathcal{A}](\mathbf{z}^*)^T \mathbf{J}_\mathbf{z}[\mathcal{A}](\mathbf{z}^*) \succcurlyeq 0.\\
    \end{align*}

    This satisfies statement 1. Statement 2 is satisfied similarly, by choosing the same $\mathbf{z}$. Note that this case differs, in that $\mathbf{z} = \mathbf{x}_0$ is no longer necessarily a valid solution.

    Statement 3, is already satisfied in the cases where $\mathcal{A}$ has rank equal to the dimension of its co-domain (if $d = n$, this is equivalent to being full rank), since $\mathbf{y} - \mathcal{A}(\mathbf{z}^*) = 0$. Therefore we assume $\mathcal{A}$ is low-rank to prove the remaining cases.

    To show orthogonality between $\mathbf{y} - \mathcal{A}(\mathbf{z}^*)$ and $\mathcal{A}(\mathbf{z}^*) - \mathcal{A}(\mathbf{x}_0)$ in other cases, we let $\eta = \delta + \delta_\perp$. We can choose an optimal value for $\delta_\perp$ that satisfies $\delta_\perp^* = \underset{\mathbf{\delta_\perp}}{\inf} \left \{ || \mathbf{y} - \mathcal{A}(\mathbf{z}^*) - \delta_\perp ||^2_2 \right \}$, for the optimal value, $\mathbf{z}^*$. Due to the non-negativity and $0$ preserving properties of norms, we have,
    
    \begin{align*}
        \delta_\perp^* &=  \mathbf{y} - \mathcal{A}(\mathbf{z}^*)\\
        &=  \mathcal{A}(\mathbf{x}_0) + \eta - \mathcal{A}(\mathbf{z}^*)\\
        &= \mathcal{A}(\mathbf{x}_0) + \delta_\perp^* + \delta^* - \mathcal{A}(\mathbf{z}^*)\\
        \implies \delta^* &= \mathcal{A}(\mathbf{z}^*) - \mathcal{A}(\mathbf{x}_0).
    \end{align*}

    At the optima of the original objective, $\mathbf{z}^*$, the first order necessary condition dictates that,
    
    \begin{align*}
        \mathbf{J}_\mathbf{z}[\mathcal{A}](\mathbf{z}^*)^T (y - \mathcal{A}(\mathbf{z}^*)) &= \mathbf{J}_\mathbf{z}[\mathcal{A}](\mathbf{z}^*)^T \delta_\perp^* := 0.\\
    \end{align*}

    For a linear $\mathcal{A}$, the Jacobian is constant, so let $\mathbf{J}_\mathbf{z}[\mathcal{A}] = \mathbf{J}$. Therefore, $\mathbf{J}^T \delta_\perp^* = 0$, meaning $\delta_\perp^* \in \mathcal{N}(\mathbf{J}^T)$.\\

    Simultaneously, since $\delta^* = \mathcal{A}(\mathbf{x}) - \mathcal{A}(\mathbf{z}^*) = \mathcal{A}(\mathbf{x}-\mathbf{z}^*) = \mathbf{J} (\mathbf{x} - \mathbf{z}^* )$, we have $\delta^* \in \mathcal{R}(\mathbf{J}^T)$. Therefore due to the orthogonality of range and null spaces of a matrix, $\langle \delta_\perp^* ,\, \delta^* \rangle = 0$, completing the proof.

\end{proof}

We are now able to prove the theorems in the main text.
\begin{proof}[Proof of Theorem \ref{thm:sufficiency}]
    We leverage the theory of sufficient statistics to demonstrate our result. Namely, if ${\epsilon_\mathbf{y}}_*$ is a sufficient statistic for $\mathbf{y}$, then,
\begin{equation}
	p(\mathbf{y} | {\epsilon_\mathbf{y}}_*) = p(\mathbf{y} | {\epsilon_\mathbf{y}}_*, \mathbf{x}_0).
\end{equation}
    Therefore it suffices to demonstrate that ${\epsilon_\mathbf{y}}_*$ is a sufficient statistic for $\mathbf{y}$.
    
	By the Neyman-Fisher Factorization theorem, we have that a necessary and sufficient condition is if there exists non-negative functions $g$ and $h$ such that
	\begin{equation}
		p(\mathbf{y} | \mathbf{x}_0) = g({\epsilon_\mathbf{y}}_*, \mathbf{x}_0) h(\mathbf{y}).
	\end{equation}
	
	We observe that since $\eta \sim \mathcal{N}(\mathbf{0}, \sigma_\mathbf{y}^2 \mathbf{I})$, our random variable $\mathbf{y}$ can be characterized by the density function
	\begin{equation}
		p(\mathbf{y} | \mathbf{x}_0) = \mathcal{N}(\mathbf{y}; \mu = \mathcal{A}(\mathbf{x}_0), \Sigma = \sigma_\mathbf{y}^2 \mathbf{I}).
	\end{equation}
	Therefore, letting $\mathbf{y}_{{\epsilon_\mathbf{y}}_*} = \mathcal{A}(\frac{1}{\sqrt{\alpha_t}}(\mathbf{x}_t + \sigma^2_t{\epsilon_\mathbf{y}}_*))$, we can write
	\begin{align}
		p(\mathbf{y} | \mathbf{x}_0) 
		&= (2 \pi \sigma_\mathbf{y}^2)^{-n / 2} \exp\left(-\frac{1}{2 \sigma_\mathbf{y}^2} || \mathbf{y} - \mathcal{A}(\mathbf{x}_0) ||_2^2 \right) \\
		&= (2 \pi \sigma_\mathbf{y}^2)^{-n / 2} \exp\left(-\frac{1}{2 \sigma_\mathbf{y}^2} \left(|| \mathbf{y} - \mathbf{y}_{{\epsilon_\mathbf{y}}_*}||_2^2 + ||\mathbf{y}_{{\epsilon_\mathbf{y}}_*} - \mathcal{A}(\mathbf{x}_0)||_2^2 + 2\langle \mathbf{y} - \mathbf{y}_{{\epsilon_\mathbf{y}}_*}, \mathbf{y}_{{\epsilon_\mathbf{y}}_*} - \mathcal{A}(\mathbf{x}_0) \rangle \right) \right)\\
		&= (2 \pi \sigma_\mathbf{y}^2)^{-n / 2} \exp\left(-\frac{1}{2 \sigma_\mathbf{y}^2} ||\mathbf{y}_{{\epsilon_\mathbf{y}}_*} - \mathcal{A}(\mathbf{x}_0)||_2^2 \right) \exp\left(-\frac{1}{2 \sigma_\mathbf{y}^2} || \mathbf{y} - \mathbf{y}_{{\epsilon_\mathbf{y}}_*}||_2^2 \right),
	\end{align}
	where the third equality is due to Lemma \ref{lem:optimality_construction}. In the case that $\mathcal{A}$ is surjective, or the noiseless regime, statements 2 and 1 respectively satisfy the equality above trivially, as $\mathbf{y} = \mathbf{y}_{{\epsilon_\mathbf{y}}_*}$. If the operator is otherwise linear, statement 3 shows the cross term vanishes.
	
	Therefore, we can assign
	\begin{align}
		g({\epsilon_\mathbf{y}}_*, \mathbf{x}_0) &= (2 \pi \sigma_\mathbf{y}^2)^{-n / 2} \exp\left(\frac{1}{2 \sigma_\mathbf{y}^2} ||\mathbf{y}_{{\epsilon_\mathbf{y}}_*} - \mathcal{A}(\mathbf{x}_0)||_2^2 \right)\\
        h(\mathbf{y}) &= \exp\left(\frac{1}{2 \sigma_\mathbf{y}^2} || \mathbf{y} - \mathbf{y}_{{\epsilon_\mathbf{y}}_*}||_2^2 \right).
	\end{align}
    In the case where the measurement process $\mathcal{A}(\mathbf{x}) = \mathbf{y}$ is noiseless, this implies $h(\mathbf{y}) = 1$.
\end{proof}

We now modify the argument in order to relax the linearity assumption.

\begin{proof}[Proof of Theorem \ref{thm:general_sufficiency}]

    Let $\mathbf{z} = \frac{1}{\sqrt{\alpha_t}} \left ( \mathbf{x}_t + \sigma_t^2 {\epsilon_\mathbf{y}} \right )$, and $\mathbf{z}^* = \underset{\mathbf{z}}{\text{argmin}} \left \{ || \mathbf{y} - \mathcal{A}(\mathbf{z}) || \right \}$.

    Since $\mathbf{z}^*$ minimizes the objective $||\mathbf{y} - \mathcal{A}(\mathbf{z})||$, we also have that,
    
    $$\phi(\mathbf{z}^*) := \underset{\alpha}{\text{argmin}} \left \{ ||\mathbf{y} - \mathbf{P}^T(\alpha)|| \right \} = \underset{\alpha}{\text{argmax}}\ \  p(\mathbf{y} | \alpha).$$
    
    We can invoke Lemma \ref{lem:optimality_construction} to say

    \begin{align*}
        || \mathbf{y} - \mathbf{P}^T \phi(\mathbf{x}_0) ||_2^2 &= || \mathbf{y} - \mathbf{P}^T  \phi(\mathbf{z}^*)||_2^2  + ||\mathbf{P}^T  \phi(\mathbf{z}^*) - \mathbf{P}^T \phi (\mathbf{x}_0)||_2^2,\\
    \end{align*}

    since $\mathbf{P}^T$ is a linear operator, and $\phi(\mathbf{z}^*)$ satisfies the conditions in the lemma. Therefore, we have,

    \begin{align*}
        p(\mathbf{y} | \mathbf{x}_0) &= (2 \pi \sigma_\mathbf{y}^2)^{-n / 2} \exp\left(-\frac{1}{2 \sigma_\mathbf{y}^2} || \mathbf{y} - \mathbf{P}^T\phi(\mathbf{x}_0) ||_2^2 \right) \\
        &= (2 \pi \sigma_\mathbf{y}^2)^{-n / 2} \exp\left(-\frac{1}{2 \sigma_\mathbf{y}^2} || \mathbf{y} - \mathbf{P}^T \phi(\mathbf{z}^*)||_2^2 \right ) \exp\left(-\frac{1}{2 \sigma_\mathbf{y}^2} ||\mathbf{P}^T \phi (\mathbf{z}^*) - \mathbf{P}^T \phi (\mathbf{x}_0)||_2^2 \right ).\\
	\end{align*}

    We assign terms,

    \begin{align}
		g(\mathbf{z}_*, \mathbf{x}_0) &=  (2 \pi \sigma_\mathbf{y}^2)^{-n / 2} \exp\left(-\frac{1}{2 \sigma_\mathbf{y}^2} ||\mathbf{P}^T \phi (\mathbf{z}^*) - \mathbf{P}^T \phi (\mathbf{x}_0)||_2^2 \right ) \\
        h(\mathbf{y}) &= \exp\left(-\frac{1}{2 \sigma_\mathbf{y}^2} || \mathbf{y} - \mathbf{P}^T \phi(\mathbf{z}^*)||_2^2 \right ),\\
	\end{align}

    and once again invoke the Neyman-Fisher Factorization theorem to show $\mathbf{z}^*$ is sufficient for $\mathbf{y}$. Since ${\epsilon_\mathbf{y}}_*$ is a bijective mapping from $\mathbf{z}^*$, we have that ${\epsilon_\mathbf{y}}_*$ is sufficient, and similarly to Theorem \ref{thm:sufficiency} we state, $p(\mathbf{y}|{\epsilon_\mathbf{y}}_*) = p(\mathbf{y} | {\epsilon_\mathbf{y}}_*, \mathbf{x}_0) = p(\mathbf{y} | {\epsilon_\mathbf{y}}_*)$.
\end{proof}

Finally, we note that this proof provides necessary conditions, but not sufficient conditions for the sufficiency of $\textbf{DCS}$'s estimator. In this work, we do not investigate operators outside of the scope of Theorem \ref{thm:general_sufficiency}, there are potentially even weaker conditions on $\mathcal{A}$ that exist.

\section{INVERTABILITY OF FORWARD OPERATOR}
\label{sec:noninvertible_a}
Often, $\mathcal{A}$ is simply non-invertible (e.g. for super-resolution, inpainting, phase retrieval, and sparse MRI reconstruction tasks). With other tasks such as signal deblurring, the invertibility of $\mathcal{A}$ is often mathematically possible, but not numerically stable. In theory, blurring operator can be represented as convolution operators on the signal $\mathbf{x}$. Theorefore, the convolution theorem tells us that inverting a blurring operator $\mathcal{G}(\ast)$ on $\mathbf{x}$ is as simple as taking the quotient of the convolved signal $\mathbf{y} = \mathcal{A}(\mathbf{x})$ against the convolution kernel in the frequency domain, i.e.,
\begin{equation}
    \mathbf{x} = \mathcal{F}^{-1}[\mathcal{F}(\mathbf{y}) / \mathcal{F}(\mathcal{G})] = \mathbf{y} \ast \mathcal{F}^{-1}[\mathcal{F}(\mathcal{G})^{-1}]
    \label{eq:noninvertible_a}
\end{equation}
where $\mathcal{F}$ denotes the Fourier operator. However, in practice, there are implicit assumptions in Eq. \ref{eq:noninvertible_a}, such as the computability of $\mathcal{F}(\mathcal{G})$ and the existence of $\mathcal{F}(\mathcal{G})^{-1}$, that may not always hold. In particular, blur kernels are often truncated in practice, resulting in a highly ill-conditioned $\mathcal{F}(\mathcal{G})$ in the frequency domain, and numerical unstable (or non-existent) inverses. Ultimately, directly inverting $\mathcal{A}$ often fails to produce the highest quality results, even though it may be possible.

\section{ADDITIONAL EXPERIMENTS} \label{sec:additional_experiments}
In this section, we provide further comparisons against latent and Jacobian-free methods (Table \ref{table:inverse_solvers_overview_latent_jf}).

\subsection{Comparison against Latent Models (Table \ref{table:main_quantitative_latent})}
We show that our pixel-based model also performs favorably against latent models in Table \ref{table:main_quantitative_latent}. We retain the same experimental setting on pixel-based models as in Table \ref{table:main_quantitative}. For FFHQ, we use the pretrained FFHQ model weights from \citep{chung2022diffusion} for our method, and the pretrained FFHQ model with a VQ-F4 first stage model \citep{rombach2022high} in latent space models. For ImageNet, we again use pretrained model weights from \citep{chung2022diffusion} in pixel-based diffusion solvers, and the Stable Diffusion v1.5 latent model for latent solvers. As with pixel-based methods many existing works suffer in the presence of additional noise. Further implementation details are discussed in Appendix \ref{sec:implementation_details}.

\begin{figure}
    \captionof{table}{Description of latent and Jacobian-free solvers used for comparisons in text. For each solver we list the type (as described in Section \ref{sec:tweedies}), optimization space (pixel or latent), whether it requires backpropagation through a neural function evaluation (NFE, i.e., the score network call), as well as runtime and memory footprint.}
    \label{table:inverse_solvers_overview_latent_jf}
    \centering
        \begin{tabular}{l|lcccc}
            \textbf{Solver} & \textbf{Type} & \textbf{Space} & \begin{tabular}{@{}c@{}}\textbf{No NFE}  \\ \textbf{Backprop} \end{tabular}  & \textbf{Runtime} & \textbf{Memory}\\
            \hline
            \textbf{DCS} (Ours) & Hybrid & Pixel & \cmark & \textbf{1x} & \textbf{1x}\\
            \midrule
            Latent-DPS \citep{chung2023diffusion} \footnotemark[3] &  Posterior & Latent & \xmark &$6.1\mathbf{x}$ & $8.9\mathbf{x}$\\
            PSLD \citep{rout2023beyond} & Posterior & Latent & \xmark &$7.5\mathbf{x}$ & $15\mathbf{x}$\\
            STSL \citep{rout2024solving} & Posterior & Latent & \xmark &$1.85\mathbf{x}$ & $9\mathbf{x}$ \\
            ReSample \citep{song2024solving} & Projection & Latent & \cmark\footnotemark[4] &$29.5\mathbf{x}$ & $8.95\mathbf{x}$ \\
            \midrule
            DPS-JF \citep{chung2023diffusion} & Posterior & Pixel & \cmark & $1.5\mathbf{x}$ & $1.1\mathbf{x}$\\
            LGD-MC (n=10) \cite{song2023loss} & Posterior & Pixel & \xmark & 6x & 3.2x \\
            LGD-MC-JF (n=10) \cite{song2023loss} & Posterior & Pixel & \xmark & $2\mathbf{x}$ & $1.1\mathbf{x}$ \\
            \bottomrule
        \end{tabular}
\end{figure}

\footnotetext[3]{Latent-DPS is a direct application of DPS \cite{chung2023diffusion} to latent diffusion models. It is also mentioned in \citep{rout2023beyond}.}
\footnotetext[4]{As described in \citep{song2024solving}, ReSample does not run backpropagation on the score network, however the implementation does (Appendix \ref{sec:appendix_resample}).}

\begin{table*}
    \caption{Quantitative comparison against latent models on FFHQ 256x256-1K and ImageNet-1K datasets across various inverse problem tasks and noise levels ($\sigma_{\mathbf{y}} \in \{ 0.01, 0.1 \}$).}
    \label{table:main_quantitative_latent}
    \centering
    \begin{adjustbox}{max width=\linewidth}
        \begin{tabular}{@{}lccccccccccccccc@{}}
            \toprule
            \textbf{FFHQ} & \multicolumn{3}{c}{SR $\times 4$} & \multicolumn{3}{c}{Random Inpainting} & \multicolumn{3}{c}{Box Inpainting} & \multicolumn{3}{c}{Gaussian Deblurring} & \multicolumn{3}{c}{Motion Deblurring}\\
            \cmidrule(r){2-4}\cmidrule(lr){5-7}\cmidrule(l){8-10}\cmidrule(l){11-13}\cmidrule(l){14-16}
            $\sigma_\mathbf{y}=0.01$ & LPIPS $\downarrow$ & PSNR $\uparrow$ & FID $\downarrow$ & LPIPS $\downarrow$ & PSNR $\uparrow$ & FID $\downarrow$ & LPIPS $\downarrow$ & PSNR $\uparrow$ & FID $\downarrow$ & LPIPS $\downarrow$ & PSNR $\uparrow$ & FID $\downarrow$ & LPIPS $\downarrow$ & PSNR $\uparrow$ & FID $\downarrow$ \\
            \midrule
            Ours & \textbf{ 0.137} & \textbf{30.138} & \textbf{19.45} & \textbf{0.024} & \textbf{34.839} & \textbf{21.19} & \textbf{0.088} & \textbf{25.112} & \textbf{19.25} & \textbf{0.103} & \textbf{28.688} & \textbf{22.62} & \textbf{0.087} & \textbf{29.480} &  \textbf{26.67} \\
            \midrule
            Latent-DPS & 0.324 & 20.086 & 100.27 & 0.249 & 22.64 & 297.43 & 0.227 & 22.184 & 211.23 & 0.390 & 25.608 & 321.5 & 0.950 & -6.753 & 354.95 \\
            PSLD & 0.311 & 20.547 & 42.26 & 0.250 & 22.84 & 214.08 & 0.221 & 22.23 & 204.87 & 0.200 & 23.77 & 318.20 & 0.213 & 23.277 & 359.40 \\
            STSL & 0.614 & 16.063 & 327.38 & 0.476 & 17.859 & 190.64 & 0.436 & 11.843 & 190.64 & 0.583 & 15.196 & 364.07 & 0.604 & 10.095 & 388.68\\
            ReSample & 0.221 & 24.699 & 48.87 & 0.467 & 22.488 & 96.89 & 0.247 & 20.852 & 50.3 & 0.191 & \underline{27.151} & 46.5& 0.281 & \underline{25.138} & 65.06\\
            \bottomrule
            \toprule
            \textbf{FFHQ} & \multicolumn{3}{c}{SR $\times 4$} & \multicolumn{3}{c}{Random Inpainting} & \multicolumn{3}{c}{Box Inpainting} & \multicolumn{3}{c}{Gaussian Deblurring} & \multicolumn{3}{c}{Motion Deblurring}\\
            \cmidrule(r){2-4}\cmidrule(lr){5-7}\cmidrule(l){8-10}\cmidrule(l){11-13}\cmidrule(l){14-16}
            $\sigma_\mathbf{y}=0.1$ & LPIPS $\downarrow$ & PSNR $\uparrow$ & FID $\downarrow$ & LPIPS $\downarrow$ & PSNR $\uparrow$ & FID $\downarrow$ & LPIPS $\downarrow$ & PSNR $\uparrow$ & FID $\downarrow$ & LPIPS $\downarrow$ & PSNR $\uparrow$ & FID $\downarrow$ & LPIPS $\downarrow$ & PSNR $\uparrow$ & FID $\downarrow$ \\
            \midrule
            Ours                 & \textbf{0.1748} & \textbf{24.879} & \textbf{30.107} & \textbf{0.1490} & \textbf{27.536} & \textbf{32.800} & \textbf{0.1631} & \textbf{23.217} & \textbf{26.444} & \textbf{0.1763} & \textbf{25.955} & \textbf{26.083} & \textbf{0.2238} & \textbf{24.612} & \textbf{31.400} \\
            \midrule
            Latent-DPS           & 0.3444 & 19.971 & 45.052 & 0.4455 & 18.117 & 109.83 & 0.6410 & 11.365 & 326.75 & 0.6398 & 13.762 & 330.93 & 0.6360 & 12.524 & 334.43 \\
            PSLD                 & 0.3481 & 19.251 & 47.864 & 0.3105 & 20.588 & 41.737 & 0.3121 & 19.874 & 40.428 & 0.2897 & 21.068 & 36.600 & 0.3307 & 19.224 & 40.374 \\
            STSL                 & 0.3161 & 20.279 & 40.163 & 0.3722 & 19.247 & 54.648 & 0.5481 & 13.864 & 183.00 & 0.5137 & 16.411 & 169.32 & 0.5188 & 15.463 & 163.65 \\
            ReSample             & 0.2613 & 24.184 & 50.224 & 0.5267 & 21.575 & 103.62 & 0.2789 & 20.581 & 53.263 & 0.2984 & 23.980 & 56.489 & 0.6456 & 19.912 & 110.42 \\
            \bottomrule
            \toprule
            \textbf{ImageNet}& \multicolumn{3}{c}{SR $\times 4$} & \multicolumn{3}{c}{Random Inpainting} & \multicolumn{3}{c}{Box Inpainting} & \multicolumn{3}{c}{Gaussian Deblurring} & \multicolumn{3}{c}{Motion Deblurring}\\
            \cmidrule(r){2-4}\cmidrule(lr){5-7}\cmidrule(l){8-10}\cmidrule(l){11-13}\cmidrule(l){14-16}
            $\sigma_\mathbf{y}=0.01$ & LPIPS $\downarrow$ & PSNR $\uparrow$ & FID $\downarrow$ & LPIPS $\downarrow$ & PSNR $\uparrow$ & FID $\downarrow$ & LPIPS $\downarrow$ & PSNR $\uparrow$ & FID $\downarrow$ & LPIPS $\downarrow$ & PSNR $\uparrow$ & FID $\downarrow$ & LPIPS $\downarrow$ & PSNR $\uparrow$ & FID $\downarrow$ \\
            \midrule
            Ours & \textbf{0.238} & \textbf{23.452} & \textbf{39.41} & \textbf{0.142} & \textbf{26.063} & \textbf{34.46} & \textbf{0.230} & \textbf{20.625} & \textbf{37.11} & \textbf{0.253} & \textbf{24.218} & \textbf{38.96} & \textbf{0.203} & \textbf{24.619} & \textbf{38.63}\\
            \midrule
            Latent-DPS & 0.642 & 17.973 & 144.82 & 0.603 & 19.881 & 144.81 & 0.751 & 11.964 & 138.33 & 0.805  & 10.532 & 139.62 & 0.821 & 10.697 & 150.49\\
            PSLD  & 0.380 & 22.690 & 168.08 & 0.306 & 24.167 & 125.25  & 0.330 & 18.290 & 156.30 & \underline{0.397} & 23.076 & 134.18 & \underline{0.453} & \underline{21.576} & 187.21 \\
            STSL &  0.617 & 19.682 & 143.62 & 0.599 & 20.500 & 137.09 & 0.832 & 9.560 & 170.93 & 0.869 & 8.708 & 183.38 &  0.882 & 8.527 &  195.74\\
            ReSample & 0.552 & 20.260 & 133.42 & 0.820 & 17.775 & 229.82 & 0.504 & 16.795 & 138.97 & 0.513 & 21.578 & 116.04 &  0.573 & 20.430 &  145.67\\
            \bottomrule
            \toprule
            \textbf{ImageNet} & \multicolumn{3}{c}{SR $\times 4$} & \multicolumn{3}{c}{Random Inpainting} & \multicolumn{3}{c}{Box Inpainting} & \multicolumn{3}{c}{Gaussian Deblurring} & \multicolumn{3}{c}{Motion Deblurring}\\
            \cmidrule(r){2-4}\cmidrule(lr){5-7}\cmidrule(l){8-10}\cmidrule(l){11-13}\cmidrule(l){14-16}
            $\sigma_\mathbf{y}=0.1$ & LPIPS $\downarrow$ & PSNR $\uparrow$ & FID $\downarrow$ & LPIPS $\downarrow$ & PSNR $\uparrow$ & FID $\downarrow$ & LPIPS $\downarrow$ & PSNR $\uparrow$ & FID $\downarrow$ & LPIPS $\downarrow$ & PSNR $\uparrow$ & FID $\downarrow$ & LPIPS $\downarrow$ & PSNR $\uparrow$ & FID $\downarrow$ \\
            \midrule
            Ours & \textbf{0.4015} & \textbf{22.988} & \textbf{48.211} & \textbf{0.1655} & \textbf{26.043} & \textbf{34.469} & \textbf{0.2428} & \textbf{19.697} & \textbf{46.026} & \textbf{0.4068} & \textbf{22.283} & \textbf{51.131} & \textbf{0.4348} & \textbf{20.428} & \textbf{61.48} \\
            \midrule
            Latent-DPS           & 0.7257 & 15.676 & 147.65 & 0.7973 & 9.4153 & 146.69 & 0.7980 & 9.3345 & 146.51 & 0.7988 & 9.3032 & 193.84 & 0.8525 & 9.1369 & 170.08 \\ 
            PSLD          & 0.4731 & 20.875 & 130.99 & 0.6068 & 19.668 & 145.51 & 0.7028 & 13.909 & 146.74 & 0.7372 & 14.181 & 139.90 & 0.7504 & 13.767 & 149.75 \\ 
            ReSample      & 0.6514 & 18.997 & 155.26 & 0.9654 & 13.612 & 281.82 & 0.5980 & 15.843 & 168.06 & 0.6814 & 19.233 & 173.72 & 1.0461 & 15.249 & 223.52 \\ 
            \bottomrule
        \end{tabular}
    \end{adjustbox}
    
\end{table*}

\subsection{Comparison against other Jacobian-Free Methods (Table \ref{table:main_quantitative_jf})}
\label{sec:jf}
A major advantage of \textbf{DCS} is the fact that it is Jacobian-free (Section \ref{sec:efficiency}) --- this results in at least $6\times$ reduction in memory cost during inference compared with Jacobian-based methods, which can be a major enabling factor for the adoption of such algorithms on consumer GPUs and edge devices. However, naively removing the backpropagation through the score network can reduce the quality of the measurement consistency correction step in inverse solvers. In this experiment, we demonstrate that our treatment via the maximum likelihood framework and the \textbf{n}oise-\textbf{a}ware \textbf{m}aximization results in significantly higher quality samples, compared to a naive implementation in DPS-JF and LGD-JF, which are both Jacobian-free variants of the original algorithms \cite{chung2022diffusion} and \cite{song2023solving}. Namely, we approximate the Jacobian with respect to the input to the denoising network (left hand side) by the Jacobian with respect to the predicted $\mathbf{x}_0$ (right hand side)
\begin{equation}
    \frac{\partial}{\partial \mathbf{x}_t} ||\mathbf{y} - \mathcal{A}(\mathbf{\hat{x}})||_2^2 \approx \frac{\partial}{\partial \hat{\mathbf{x}}} ||\mathbf{y} - \mathcal{A}(\mathbf{\hat{x}})||_2^2,
\end{equation}
where $\hat{\mathbf{x}} = \mathbf{f}(\mathbf{x}_t, \boldsymbol\epsilon_\theta(\mathbf{x}_t, t))$ and $\mathbf{f}$ is an algorithm-dependent function of $\mathbf{x}_t$ and its score. (Note that the right hand side no longer involves backpropagation through $\mathbf{f}$ and therefore $\boldsymbol\epsilon_\theta$).

\begin{table*}
    \caption{Quantitative comparison against other Jacobian-free methods on FFHQ 256x256-1K and ImageNet-1K datasets across various inverse problem tasks and noise levels ($\sigma_{\mathbf{y}} \in \{ 0.01, 0.1 \}$).}
    \label{table:main_quantitative_jf}
    \centering
    \begin{adjustbox}{max width=\linewidth}
        \begin{tabular}{@{}lccccccccccccccc@{}}
            \toprule
            \textbf{FFHQ} & \multicolumn{3}{c}{SR $\times 4$} & \multicolumn{3}{c}{Random Inpainting} & \multicolumn{3}{c}{Box Inpainting} & \multicolumn{3}{c}{Gaussian Deblurring} & \multicolumn{3}{c}{Motion Deblurring}\\
            \cmidrule(r){2-4}\cmidrule(lr){5-7}\cmidrule(l){8-10}\cmidrule(l){11-13}\cmidrule(l){14-16}
            $\sigma_\mathbf{y}=0.01$ & LPIPS $\downarrow$ & PSNR $\uparrow$ & FID $\downarrow$ & LPIPS $\downarrow$ & PSNR $\uparrow$ & FID $\downarrow$ & LPIPS $\downarrow$ & PSNR $\uparrow$ & FID $\downarrow$ & LPIPS $\downarrow$ & PSNR $\uparrow$ & FID $\downarrow$ & LPIPS $\downarrow$ & PSNR $\uparrow$ & FID $\downarrow$ \\
            \midrule
            Ours & \textbf{ 0.137} & \textbf{30.138} & \textbf{19.45} & \textbf{0.024} & \textbf{34.839} & \textbf{21.19} & \textbf{0.088} & \textbf{25.112} & \textbf{19.25} & \textbf{0.103} & \textbf{28.688} & \textbf{22.62} & \textbf{0.087} & \textbf{29.480} &  \textbf{26.67} \\
            \midrule
            DPS-JF & 0.488 & 14.193 & 44.98 & 0.335 & 19.566 & 58.45 & 0.178 & 20.118 & 28.10 & 0.211 & 23.063 & 34.42 & 0.289 & 19.927 & 40.94 \\
            DPS-JF $(T=100)$ & 0.589 & 9.473 & 41.24 & 0.578 & 10.072 & 42.06 & 0.571 & 10.618 & 43.08 & 0.563 & 10.859 & 43.77 & 0.566 & 10.922 & 41.26 \\
            LGD-MC-JF & 0.566 & 10.502 & 41.25 & 0.537 & 12.154 & 43.85 & 0.497 & 13.811 & 46.40 & 0.452 & 15.569 & 46.22 & 0.457 & 15.466 & 46.08 \\
            LGD-MC-JF $(T=100)$ & 0.593 & 9.346 & 40.60 & 0.587 & 9.688 & 40.99 & 0.581 & 10.126 & 42.30 & 0.574 & 10.273 & 40.59 & 0.574 & 10.364 & 40.51 \\
            DDNM & 0.208 & \underline{26.277} & 51.33 & \underline{0.040} & \underline{33.076} & 23.35  &     0.209 &     18.118 & 88.32 & 0.235 & 26.086 & 71.47 & 0.424 & 14.221 & 250.92 \\
            DDRM & 0.502 & 13.002 & 222.45 & 0.393 & 15.935 & 163.91 & 0.472 & 12.148  & 209.18 & - & - & - & - & - & - \\
            \bottomrule
            \toprule
            \textbf{FFHQ} & \multicolumn{3}{c}{SR $\times 4$} & \multicolumn{3}{c}{Random Inpainting} & \multicolumn{3}{c}{Box Inpainting} & \multicolumn{3}{c}{Gaussian Deblurring} & \multicolumn{3}{c}{Motion Deblurring}\\
            \cmidrule(r){2-4}\cmidrule(lr){5-7}\cmidrule(l){8-10}\cmidrule(l){11-13}\cmidrule(l){14-16}
            $\sigma_\mathbf{y}=0.1$ & LPIPS $\downarrow$ & PSNR $\uparrow$ & FID $\downarrow$ & LPIPS $\downarrow$ & PSNR $\uparrow$ & FID $\downarrow$ & LPIPS $\downarrow$ & PSNR $\uparrow$ & FID $\downarrow$ & LPIPS $\downarrow$ & PSNR $\uparrow$ & FID $\downarrow$ & LPIPS $\downarrow$ & PSNR $\uparrow$ & FID $\downarrow$ \\
            \midrule
            Ours                 & \textbf{0.1748} & \textbf{24.879} & \textbf{30.107} & \textbf{0.1490} & \textbf{27.536} & \textbf{32.800} & \underline{0.1631} & \textbf{23.217} & \textbf{26.444} & \textbf{0.1763} & \textbf{25.955} & \textbf{26.083} & \underline{0.2238} & \textbf{24.612} & \textbf{31.400} \\
            \midrule
            DPS-JF & 0.494 & 14.111 & 46.59 & 0.371 & 18.310 & 56.49 & 0.226 & 19.451 & 34.02 & 0.246 & 21.808 & 35.53 & 0.342 & 18.339 & 40.70 \\
            DPS-JF $(T=100)$ & 0.589 & 9.432 & 40.82 & 0.582 & 9.900 & 39.58 & 0.572 & 10.552 & 42.90 & 0.564 & 10.894 & 42.36 & 0.568 & 10.943 & 42.44 \\
            LGD-MC-JF & 0.557 & 11.208 & 44.86 & 0.511 & 13.265 & 49.07 & 0.452 & 15.243 & 48.68 & 0.396 & 17.434 & 46.76 & 0.400 & 17.301 & 45.53 \\
            LGD-MC-JF $(T=100)$ & 0.594 & 9.324 & 41.06 & 0.589 & 9.655 & 41.65 & 0.580 & 10.107 & 42.97 & 0.578 & 10.334 & 41.84 & 0.574 & 10.312 & 41.53 \\
            DDNM                 &      0.6230 &     21.493 &    145.889 &      0.179 &     24.964 &     39.183 &      0.334 &     19.195 &     72.105 &      1.220 &     10.727 &    176.756 &      0.739 &      5.099 &    524.021 \\
            DDRM                 & 0.7853 & 6.3273 & 271.70 & 0.6018 & 10.995 & 255.95 & 0.6323 & 9.6360 & 288.11 & - & - & - & - & - & - \\
            \bottomrule
        \end{tabular}
    \end{adjustbox}
\end{table*}

\subsection{Further Noise Experiments}
\begin{table}[H]
    \caption{Quantitative experiments on FFHQ 256x256-1K at $\sigma_{\mathbf{y}} = 0.5$. We compare against pixel-based solvers (upper half) and latent-based solvers (lower half).}
    \label{table:sigma_0_5_ffhq}
    \centering
    \begin{adjustbox}{max width=\linewidth}
        \begin{tabular}{@{}lccccccccccccccc@{}}
            \toprule
            \textbf{FFHQ} & \multicolumn{3}{c}{SR $\times 4$} & \multicolumn{3}{c}{Random Inpainting} & \multicolumn{3}{c}{Box Inpainting} & \multicolumn{3}{c}{Gaussian Deblurring} & \multicolumn{3}{c}{Motion Deblurring}\\
            \cmidrule(r){2-4}\cmidrule(lr){5-7}\cmidrule(l){8-10}\cmidrule(l){11-13}\cmidrule(l){14-16}
            & LPIPS $\downarrow$ & PSNR $\uparrow$ & FID $\downarrow$ & LPIPS $\downarrow$ & PSNR $\uparrow$ & FID $\downarrow$ & LPIPS $\downarrow$ & PSNR $\uparrow$ & FID $\downarrow$ & LPIPS $\downarrow$ & PSNR $\uparrow$ & FID $\downarrow$ & LPIPS $\downarrow$ & PSNR $\uparrow$ & FID $\downarrow$ \\
            \midrule
            Ours                 & 0.2287 & 20.362 & 100.94 & 0.2067 & 22.999 & 89.312 & 0.2005 & 21.298 & 40.099 & 0.2109 & 24.009 & 82.132 & 0.2301 & 22.306 & 90.403 \\
            \midrule
            DPS                  & 0.2000 & 22.588 & 92.791 & 0.2290 & 22.808 & 90.739 & 0.2118 & 20.278 & 81.491 & 0.2268 & 25.020 & 83.686 & 0.2479 & 20.767 & 91.972 \\
            DDNM                 & 0.7812 & 9.8324 & 387.43 & 0.8721 & 15.573 & 233.15 & 0.9966 & 12.607 & 287.79 & 1.4475 & 3.5686 & 408.85 & 1.3328 & 3.1782 & 393.24 \\
            \midrule
            ReSample             & 0.5704 & 19.948 & 179.35 & 0.6892 & 20.014 & 200.06 & 0.4958 & 17.530 & 160.47 & 0.5409 & 21.166 & 162.40 & 0.6380 & 19.875 & 194.69 \\
            \bottomrule
            \end{tabular}
    \end{adjustbox}
    
\end{table}

\begin{figure}
    \includegraphics[width=\linewidth]{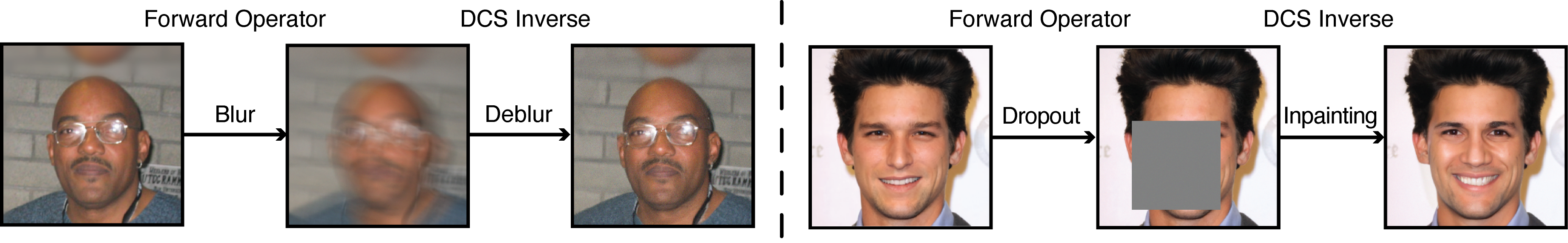}
    \caption{A demonstration of our solver, \textbf{DCS}, solving two inverse problems on natural images from the CelebA-HQ dataset. Motion blur (left), and box dropout (right) are examples of forward operators that are non-invertible. We show further results in Section \ref{sec:experiments}}
    \label{fig:explanatory}
\end{figure}

\subsection{Subset of FFHQ used in other works}

\begin{table}[H]
\caption{Quantitative evaluation of our method on FFHQ 256x256, following the experimental setup of \citep{song2024solving}. We compare against pixel-based solvers (upper half) and latent-based solvers (lower half).}
\label{table:ffhq_subset}
\centering
\begin{adjustbox}{max width=\linewidth}
\begin{tabular}{@{}cccccccccccccccccc@{}}
    \toprule
    & \multicolumn{3}{c}{SR $\times 4$} & \multicolumn{3}{c}{Random Inpainting} & \multicolumn{3}{c}{Box Inpainting} & \multicolumn{3}{c}{Gaussian Deblurring} & \multicolumn{3}{c}{Motion Deblurring} & \multicolumn{2}{c}{Cost} \\
    \cmidrule(r){2-4}\cmidrule(lr){5-7}\cmidrule(l){8-10}\cmidrule(l){11-13}\cmidrule(l){14-16} \cmidrule(lr){17-18} 
    & LPIPS $\downarrow$ & PSNR $\uparrow$ & SSIM $\uparrow$ & LPIPS $\downarrow$ & PSNR $\uparrow$ & SSIM $\uparrow$ & LPIPS $\downarrow$ & PSNR $\uparrow$ & SSIM $\uparrow$ & LPIPS $\downarrow$ & PSNR $\uparrow$ & SSIM $\uparrow$ & LPIPS $\downarrow$ & PSNR $\uparrow$ & SSIM $\uparrow$ & Time $\downarrow$ & Mem. $\downarrow$ \\
    \midrule
    Ours & \textbf{0.074} & \textbf{29.51} & \textbf{0.811} & \textbf{0.052} & \textbf{31.13} & \textbf{0.850} & \textbf{0.102} & \textbf{22.07} & 0.761 & \textbf{0.078} & \textbf{29.92} & \textbf{0.817} & \textbf{0.051} & \textbf{32.32} & \textbf{0.833} & $\mathbf{1x}$ & $\mathbf{1x}$ \\
    \midrule
    DPS & 0.132 & 27.10 & 0.729 & 0.084 & 30.91 & 0.833 & 0.107 & 21.62 & 0.755 & 0.090 & 28.26 & 0.767 & 0.108 & 26.816 & 0.726 & $6\mathbf{x}$ & $3.2\mathbf{x}$ \\
    MCG & 0.112 & 27.07 & 0.784 & 0.877 & 11.02 & 0.02 & 0.905 & 10.883 & 0.001 & 0.176 & 24.89 & 0.768 & - & - & - & $6.1\mathbf{x}$ & $3.2\mathbf{x}$ \\
    DDNM & 0.242 & 27.63 & 0.587 & 0.230 & 27.92 & 0.604 & 0.194 & 23.08 & 0.639 & 0.287 & 27.24 & 0.561 & 0.642 & 8.682 & 0.165 & $1.75\mathbf{x}$ & $1\mathbf{x}$ \\
    \midrule
    Latent-DPS & 0.324 & 20.086 & 0.473 & 0.249 & 22.64 & 0.570 & 0.227 & 22.184 & 0.595 & 0.209 & 23.512 & 0.600 & 0.217 & 22.930 & 0.582 & $6.1\mathbf{x}$ & $8.9\mathbf{x}$ \\
    PSLD & 0.311 & 20.547 & 0.491 & 0.250 & 22.84 & 0.579 & 0.221 & 22.23 & 0.607 & 0.200 & 23.77 & 0.614 & 0.213 & 23.277 & 0.596 & $7.5\mathbf{x}$ & $15\mathbf{x}$\\
    STSL & 0.242 & 27.63 & 0.587 & 0.230 & 27.92 & 0.604 & 0.194 & 23.08 & 0.639 & 0.287 & 27.24 & 0.561 & 0.641 & 10.17 & 0.245 & $1.85\mathbf{x}$ & $9\mathbf{x}$ \\
    ReSample & 0.090 & 29.024 & 0.791 & 0.053 & 30.99 & 0.844 & 0.156 & 20.71 & \textbf{0.778} & 0.113 & 29.19 & 0.784 & 0.197 & 27.65 & 0.706 & $29.5\mathbf{x}$ & $8.95\mathbf{x}$ \\
    \bottomrule
\end{tabular}
\end{adjustbox}
\end{table}

\section{IMPLEMENTATION DETAILS}
\label{sec:implementation_details}
We provide implementation details of our experiments, as well as those for other experiments we compare against.

\subsection{Our Method}
Our proposed \textbf{DCS} has just two primary hyperparameters, as described in the table below. First is the number of time steps $T$. This has relatively little effect on our model performance on most tasks. However, it is occasionally helpful to increase $T$, especially in box inpainting, where there is zero signal from $\mathbf{y}$ in the masked region. Here, higher $T$ allows the diffusion model to obtain a better solution in this unconditional diffusion process. Second, we have the choice of $\texttt{minimizer}$, which is by default the Adam optimizer \cite{kingma2014adam}. However, in the case of linear $\mathcal{A}$, this optimizer can be replaced by the closed form analytical solution to $\mathcal{A}(\mathbf{x}) = \mathbf{y}$. 

For nearly all experiments, we use the Adam optimizer with 50 optimization steps and a learning rate of $1$. The exceptions are the random inpainting and box inpainting tasks, where there is no conditioning information on the masked pixels. This requires more denoising steps, as the diffusion process is totally unconditional inside the mask, up to local correlations learned inside the score network $s_\theta$. Here, we use the analytical solver with $\mathcal{A}^{\dagger} = \mathcal{A}$. Similarly, for nearly all experiments we use $T=50$ as found in Table \ref{fig:robustness}, with the exception being random inpainting and box inpainting tasks, where we found that taking $T=1000$ steps improved performance. However, there is little increase in runtime, since the minimization step is much faster here.

\begin{center}
\begin{tabular}{c | l} 
 \hline
 Notation & \hspace{1.in} Definition\\ [0.5ex] 
 \hline
 $T$ & The number of diffusion steps used in the sampler. \\
 $\texttt{minimizer}$ & The minimizer used to solve for ${\epsilon_\mathbf{y}}$. \\ 
 [1ex] 
 \hline
\end{tabular}
\end{center}

\subsection{Latent Models on ImageNet}
We note that previous latent models use the pretrained weights in \citep{rombach2022high} for $256 \times 256$ resolution datasets. However, there are no published weights in the GitHub repository for unconditional ImageNet, making a fair comparison of our method against latent models more involved. To this end, we leverage a significantly more powerful Stable Diffusion v1.5 model, with publicly available weights on HuggingFace for our experiments. The measurements and the output images are appropriately scaled for a fair comparison.

\subsection{STSL}
At the time of writing this work, we did not find publicly available code for STSL \citep{rout2024solving}. Therefore, we implement the algorithm ourselves in our codebase, and use the hyperparameters provided in the paper.

\subsection{ReSample}
\label{sec:appendix_resample}
We directly use the published code of ReSample  \citep{song2024solving} with no changes in our paper. We discuss two notable aspects of the experiments with ReSample. First, the implementation on GitHub differs from that pseudocode discussed in the paper. Namely, the pseudocode in the paper describes enforcing latent- and pixel-based consistency occasionally during an otherwise unconditional sampling process. 

In the code we observed that the sampling step taken is actually a DPS \citep{chung2022diffusion} sampling step, which includes a posterior-based guidance step that takes an expensive gradient of the noise function. To see this, note that L255 in the \texttt{resample\_sampling} function in \texttt{ddim.py} calls a function \texttt{measurement\_cond\_fn}, which is defined at L62 in \texttt{main.py} and passed into the resampling function. This function is a member of the class \texttt{PosteriorSampling} defined in L53 in 
\texttt{condition\_methods.py}. Inspecting this class, we note that it calls 
\texttt{torch.autograd.grad} on the diffusion step as a function of \texttt{x\_prev} 
(L33 or L39). In other words, a gradient is computed for the measurement norm with respect to the input to the diffusion model, i.e., a DPS step. 

We closely investigated this DPS step in our experiments, ultimately concluding that it has a significant effect on the performance of the algorithm, and that it was a \textit{more} fair comparison to include this step, rather than removing it. However, the inclusion of this sampling step has two primary effects. First, it results in further increases the computation time of ReSample. Second it reveals that ReSample relies significantly on a posterior-based formulation, applying additional resampling steps at each stage.

In experiments, we note that ReSample is significantly slower than other algorithms during sampling (see Table \ref{table:inverse_solvers_overview}). For example, sampling $\sim 1000$ images with ImageNet takes more than two weeks on an A6000 GPU. Since we run five different experimental conditions for each dataset, this was an unacceptably long runtime for our academic resources. Therefore, we reduce the number of diffusion steps $T$ of ReSample in our experiments, from $500$ reported in \citep{song2024solving} to $50$. However, we do provide a single experiment from the \citep{song2024solving} paper, where we reproduce the hyperparameters and dataset (a $100$ image subset of FFHQ). We note that \citep{song2024solving} took a subset of the FFHQ dataset, where performance differed from the full 256$\times$256-1K dataset performance (c.f. Table \ref{table:main_quantitative}). Since the subset was not published, we selected a dataset based where ReSample obtained the same performance with its default parameters in \citep{song2024solving} (Table \ref{table:ffhq_subset}).

\subsection{DDRM}
We used the version of DDRM which is implemented in the DDNM codebase. While DDRM may theoretically be able to handle deblurring tasks, due to the high rank of the forward operators, the SVD cannot be explicitly defined in memory, and no existing code-base for DDRM supplies fast and memory-saving versions of these operators. Because in our settings DDRM's performance was similar or worse than that of DDNM, and due to the fact that DDRM can be considered a subtype of DDNM (see Appendix of \cite{wang2022zero}), we do not run benchmarks.

\section{FURTHER QUALITATIVE COMPARISONS}
\label{sec:qualitative}

We provide further qualitative examples from the FFHQ 256$\times$256-1K and ImageNet 256$\times$256-1K datasets accompanying our quantitative evaluation in Table \ref{table:main_quantitative} on our \href{https://diffusion-conditional-sampling.github.io/}{project website}.

\end{document}